\newtheorem{theorem}{\bf Theorem}[section]
\newtheorem{lemma}[theorem]{\bf Lemma}
\newtheorem{corollary}[theorem]{\bf Corollary}
\newenvironment{proof}{\noindent{\em Proof:}}{\quad \hfill$\Box$\vspace{2ex}}
\newtheorem{definition}[theorem]{\bf Definition}
\def \bI {{\bf I}}
\def \bN {\Bbb N}
\def \bR {\Bbb R}
\def \bJ {\Bbb J}
\def \bW {{\bf W}}
\def \bW {{\bf W}}
\def \barW {\bar{\bf W}}
\def \barJ {\bar{\bf J}}
\def \barb {\bar{\bf b}}
\def \bbb {{\bf b}}
\def \bw {{\bf \bar{w}}}
\def \bP {{\bf P}}
\def \bx {{\bf x}}
\def \bw {{\bf w}}
\def \by {{\bf y}}
\def \bc {{\bf c}}
\def \bd {{\bf d}}
\def \cB {{\cal B}}
\def \cD {{\cal D}}
\def \cN {{\cal N}}
\def \and {\, \mbox{\rm and}\, }
\def \supp {\,{\rm supp}\,}
\def \diag {\,{\rm diag}\,}
\def \ess {{\rm ess}}
\def \esssup {{\rm ess\ sup}}
\newcommand{\Rmnum}[1]{\expandafter\@slowromancap\romannumeral #1@}
\begin{document}
\title{\bf Convergence of Deep Convolutional Neural Networks}
\author{Yuesheng Xu\thanks{Department of Mathematics \& Statistics, Old Dominion University, Norfolk, VA 23529, USA. E-mail address: {\it y1xu@odu.edu}. Supported in part by US National Science Foundation under grant DMS-1912958 and by Natural Science Foundation of China under grant 11771464. }
\quad and \quad Haizhang Zhang\thanks{School of Mathematics (Zhuhai), Sun Yat-sen University, Zhuhai, P.R. China. E-mail
address: {\it zhhaizh2@sysu.edu.cn}. Supported in part by National Natural Science Foundation of China under grant 11971490, and by Natural Science Foundation of Guangdong Province under grant 2018A030313841. Corresponding author.}}
\date{}
\maketitle

%
%

\begin{abstract}
Convergence of deep neural networks as the depth of the networks tends to infinity is fundamental in building the mathematical foundation for deep learning. In a previous study, we investigated this question for deep ReLU networks with a fixed width. This does not cover the important convolutional neural networks where the widths are increasing from layer to layer. For this reason, we first study convergence of general ReLU networks with increasing widths and then apply the results obtained to deep convolutional neural networks. It turns out the convergence reduces to convergence of infinite products of matrices with increasing sizes, which has not been considered in the literature. We establish sufficient conditions for convergence of such infinite products of matrices. Based on the conditions, we present sufficient conditions for piecewise convergence of general deep ReLU networks with increasing widths, and as well as pointwise convergence of deep ReLU convolutional neural networks.

\medskip

\noindent{\bf Keywords:} deep learning, deep convolutional neural networks, ReLU networks, activation domains, infinite product of matrices

\end{abstract}

\section{Introduction}

Deep neural networks make great contributions to recent successes of machine learning in various applications such as image recognition, speech recognition, game intelligence, natural language processing, and autonomous navigation, \cite{Goodfellow,LeCun}. The remarkable successes have stimulated many mathematical researches in deep neural networks, with the hope of building a rigorous mathematical foundation to increase the interpretability of deep learning models, \cite{Adcock,Daubechies,Devore1,E,Elbrachter,Montanelli1,Montanelli2,Poggio,Shen1,Shen2,Shen3,Wang,Yarotsky,Zhou1}. In particular, the approximation and expressive powers of deep neural networks have been a focus. A brief review of the literature for this aspect was contained in the introduction of \cite{XuZhang}. We also refer readers to the two recent surveys \cite{Devore1,Elbrachter} for further details.

Different from those studies on approximation and expressive powers, we are concerned with the convergence of deep neural networks (DNNs) as the depth is increasing to infinity. In Fourier analysis, the convergence of a Fourier series in terms of its coefficients is a fundamental question, \cite{Zygmund}. From this perspective, we desire to characterize convergence of deep ReLU networks as the depth is increasing to infinity in terms of the weight matrices and bias vectors of the networks. We believe that this question is fundamental in mathematical theory of deep learning. In the recent work \cite{XuZhang}, we studied this question for deep ReLU networks with a fixed width and were able to provide a mathematical interpretation to the design of the successful deep residual networks \cite{KaimingHe} based on the obtained results.

However, the study \cite{XuZhang} and those studies \cite{Adcock,Daubechies,Devore1,E,Elbrachter,Montanelli1,Montanelli2,Poggio,Shen1,Shen2,Shen3,Yarotsky} on approximation and expressive powers of deep neural networks were all about fully connected networks without pre-specified structures, and thus did not cover the important convolutional neural networks (CNNs). CNNs use filter masks to perform convolution with the input data at each layer and are particularly useful and efficient in extracting features of speech and image data. Moreover, the weight matrices in CNN are sparse Toeplitz matrices compared to dense weight matrices in a fully connected deep neural network. This endows a sparse nature to CNNs. These ingredients account for the dominance of CNNs in machine learning. In fact, most of the powerful deep learning architectures, such as AlexNet, VGG, GoogleNet, and ResNet, are different types of CNNs, \cite{Goodfellow,LeCun}. We review some theoretical studies that are applicable to CNNs. Universality of CNNs was first established in \cite{Zhou1}. Several versions of the
universal approximation theorem for CNNs were recently proved in \cite{Yarotsky2}. Equivalence  of approximation by CNNs and fully-connected neural networks was explored in \cite{Petersen}. Lipschitz constants and proximal properties of neural networks were investigated in \cite{Combettes} and \cite{Hasannasab}, respectively.

We shall consider convergence of neural networks with a pure convolutional structure. Thus, similar to the treatment in \cite{Zhou1}, other engineering techniques such as pooling or batch normalization will not be considered. For such CNNs, the widths are not fixed but are increasing to infinity. This was not considered in \cite{XuZhang} where the width of neural networks is fixed. There are some major difficulties caused by the increasing widths. First, convergence of the functions determined by such CNNs need to be carefully defined. Second, we encounter an issue about infinite products of matrices with increasing sizes, which has not been considered in the field of matrix analysis to our best knowledge. Our first strategy is to extend the functions determined by CNNs to have infinite components by adding zeros to their tails and study the convergence in a space of vector-valued functions of infinite dimension. A second key strategy is to formulate the problem about infinite products of matrices with increasing sizes as a problem on the convergence of linear operators. In these manners, we are able to derive sufficient conditions for convergence of such infinite products of matrices. Based on the conditions, we present sufficient conditions for pointwise convergence of deep convolutional neural networks.


The rest of this paper is organized as follows. In Section 2, we describe the matrix form of CNNs, which expresses CNNs as a special form of DNNs.
In Section 3, we review the definition and notation of general neural networks with increasing widths and define the notion of convergence of neural networks when new layers are paved to the existing network so that the depth is increasing to infinity. Then, by introducing the notions of activation domains and activation matrices, we derive an explicit expression of deep ReLU networks. With the expression, we connect convergence of deep ReLU networks with increasing widths with the existence of two limits involving infinite products of matrices with increasing sizes. Sufficient conditions for convergence of such infinite products of matrices are established in Section 4. Based on these results, we obtain in Section 5 sufficient conditions for the pointwise convergence of deep ReLU networks with increasing widths. Finally, in Section 6 we apply the general results to establish convenient sufficient conditions for convergence of ReLU CNNs.

\section{Convolutional Neural Networks}\label{CNNs}
\setcounter{equation}{0}

For the purpose of understanding convergence of convolutional neural networks (CNNs), we consider in this section the matrix form of CNNs. Motivated by CNNs, we describe a framework of deep neural networks with increasing widths.

We review the pure convolutional structure of a ReLU neural network.
Let $\sigma$ denote the {\bf ReLU} activation function
$$
\sigma(x):=\max(x,0),\ \ x\in\bR.
$$
A neural network with the pure convolutional structure and the ReLU activation function may be illustrated as follows:
\begin{equation}\label{cnn1}
\begin{aligned}
 x\in[0,1]^d\ &  \xrightarrow[\sigma]{\bw^{(1)},\bbb_1} \  x^{(1)} &\xrightarrow[\sigma]{\bw^{(2)},\bbb_2} \  x^{(2)} &\rightarrow\cdots\rightarrow&\xrightarrow[\sigma]{\bw^{(n)},\bbb_n}  x^{(n)}. \\
\mbox{input}\quad& \quad\mbox{1st layer}&\mbox{ 2nd layer} & & \mbox{$n$-th layer}
\end{aligned}
\end{equation}
In (\ref{cnn1}), $\bw^{(i)}:=(\bw^{(i)}_0,\bw^{(i)}_1,\dots,\bw^{(i)}_{s_i})\in\bR^{s_i+1}$ and $\bbb_i$ are the filter mask and the bias vector at the $i$-th layer, respectively, and
$$
x^{(i)}:=\sigma(x^{(i-1)}*\bw^{(i)}+\bbb_i),\ \ 1\le i\le n,\ \mbox{ with }x^{(0)}:=x.
$$
Here the convolution $\bx*\bw$ of a vector $\bx:=(\bx_1, \bx_2,\dots,\bx_m)\in\bR^m$ with a filter mask $\bw:=(\bw_0,\bw_1,\dots,\bw_s)$ outputs a vector $\by:=(\by_1, \by_2,\dots, \by_{m+s})\in\bR^{m+s}$ defined by
$$
\by_i:=\sum_{j=\max(0,i-m)}^{\min(i-1,s)}\bw_j\bx_{i-j},\ \ 1\le i\le m+s.
$$
Note that we do not consider the output layer as it is simply a linear transformation and does not affect the convergence of neural networks.

The goal of this paper is to understand under what conditions on the filter mask and the bias vector, the convolutional neural networks converge to a meaningful function of the input vector. For this purpose, we find it convenient to express the convolution $\bx*\bw$ via multiplication of $x$ with a matrix. To this end, we define an $(m+s)\times m$ Toeplitz
type matrix by
$$
T_{ij}:=\left\{
\begin{aligned}
0,&\quad i<j,\\
\bw_{i-j},&\quad i\ge j,
\end{aligned}
\right.
$$
and rewrite the convolution operation in the form of vector-matrix multiplication in terms of the matrix $T$ as
$$
\by=\bx*\bw=T\bx.
$$
Clearly, Matrix $T$ can be expressed by
$$
T=\left[
\begin{array}{ccccccc}
     \bw_0&0&0&0&\cdots &\cdots &0 \\
     \bw_1&\bw_0&0&0&\cdots&\cdots &0\\
     \vdots&\ddots&\ddots&\ddots&\ddots&\ddots&\vdots\\
     \bw_s&\bw_{s-1}&\cdots&\bw_0&0&\cdots&0\\
     0&\bw_s&\cdots&\bw_1&\bw_0&0\cdots&0\\
     \vdots&\ddots&\ddots&\ddots&\ddots&\ddots&\vdots\\
     \cdots&\cdots&0&\bw_s&\bw_{s-1}&\cdots&\bw_0\\
     \cdots&\cdots&\cdots&0&\bw_s\cdots&\cdots&\bw_1\\
     \vdots&\ddots&\ddots&\ddots&\ddots&\ddots&\vdots\\
     0&\cdots&\cdots&\cdots0& \cdots&\bw_s&\bw_{s-1}\\
     0&\cdots&\cdots&\cdots&\cdots&\cdots0&\bw_s
\end{array}
\right].
$$
Note that entries of $T$ along each of the diagonal and sub-diagonals are constant.
By adopting the above matrix form of the convolution, the CNN (\ref{cnn1}) is a special case of the deep neural network with increasing widths. Specifically, we let
\begin{equation}\label{cnnwidth}
    m_0:=d, \ \ m_n:=m_{n-1}+s_n,\ \ n\in\bN
\end{equation}
and $\bW_n\in\bR^{m_n\times m_{n-1}}$ be defined by
\begin{equation}\label{cnnweightmatrix}
(\bW_n)_{jk}:=\left\{
\begin{aligned}
0,&\quad j<k\\
\bw^{(n)}_{j-k},&\quad j\ge k,
\end{aligned}
\quad 1\le j\le m_n,1\le k\le m_{n-1}.
\right.
\end{equation}
Then
$$
x^{(i)}=\sigma(\bW_ix^{(i-1)}+\bbb_i),\ \ 1\le i\le n,
$$
which is a ReLU deep neural network with increasing widths (with weight matrices $\bW_i$ of increasing sizes).

Having the above matrix form, conditions on the masks and bias vectors that ensure convergence of the convolutional neural network (\ref{cnn1}) as the depth $n$ tends to infinity are now reformulated as conditions on the weight matrices and the bias vectors. Convergence of deep ReLU neural networks with a fixed width was recently studied in \cite{XuZhang}. The matrix form for CNNs differs from that for DNNs considered in \cite{XuZhang} in that the widths in CNNs are increasing while those in \cite{XuZhang} are fixed. This difference causes major difficulties that will be clear as we proceed with the analysis. As a consequence, the results in \cite{XuZhang} are not applicable directly to the current setting. Nevertheless, the idea of replacing the application of the ReLU activation function by multiplication with certain activation matrices introduced in \cite{XuZhang} will be adopted.
Since CNN (\ref{cnn1}) is a special deep ReLU neural network with increasing widths, we shall first study convergence of general fully connected feed-forward neural networks with non-decreasing widths. The results obtained will then be applied to establishment of convergence of CNNs.

\section{Deep Neural Networks with Increasing Widths}
\setcounter{equation}{0}
In this section, we describe the setting of the general fully connected neural networks with non-decreasing widths and increasing depth and formulate their convergence as convergence of infinite products of non-square matrices.

We first describe the setting of general deep ReLU neural networks with non-decreasing widths.
Let $\{m_i:i\in\bN\}$ be a sequence of widths that are non-decreasing:
$$
m_i\le m_{i+1},\ \ i\in\bN.
$$
We consider deep ReLU networks from input domain $[0,1]^d\subseteq\bR^d$ to the output space $\bR^{d'}$. For each $1\le i\le n$, let $\bW_i$ and $\bbb_i$ denote respectively the weight matrix and the bias vector of the $i$-th hidden layer. That is, $\bbb_i\in\bR^{m_i}$ for $1\le i\le n$, and $\bW_i\in\bR^{m_i\times m_{i-1}}$ for $1\le i\le n$, where $m_0:=d$. The weight matrix $\bW_{o}$ and the bias vector $\bbb_{o}$ of the output layer satisfy $\bW_{o}\in \bR^{d'\times m_n}$ and $\bbb_{o}\in\bR^{d'}$. The structure of such a deep neural network
with the ReLU activation function $\sigma$
is illustrated as follows:
\begin{equation}\label{neuralnetworks}
\begin{aligned}
 x\in[0,1]^d\ &  \xrightarrow[\sigma]{\bW_1,\bbb_1} \  x^{(1)} &\xrightarrow[\sigma]{\bW_2,\bbb_2} \  x^{(2)} &\rightarrow\cdots\rightarrow&\xrightarrow[\sigma]{\bW_n,\bbb_n}  x^{(n)}&\xrightarrow{\bW_{o},\bbb_{o}}\  y\in\bR^{d'}. \\
\mbox{input}\quad& \quad\mbox{1st layer}&\mbox{ 2nd layer} & & \mbox{$n$-th layer}&\quad\mbox{ output}
\end{aligned}
\end{equation}
Here
\begin{equation}\label{neuralnetworks-termk}
x^{(k)}:=\sigma(\bW_{k}x^{(k-1)}+\bbb_k),\ \ 1\le k\le n \ \mbox{ with } \ x^{(0)}=x,
\end{equation}
\begin{equation}\label{neuralnetworks-output}
y:=\bW_{o}x^{(n)}+b_{o},
\end{equation}
and the activation function $\sigma$ is applied to a vector componentwise. Thus, the above deep neural network determines a continuous function $x\to y$ from $[0,1]^d$ to $\bR^{d'}$.  Below, we recall the compact notation for consecutive compositions of functions which was used in \cite{XuZhang}.

\begin{definition} {\bf (Consecutive composition)}
\label{Consecutive-compostion}
Let $f_1, f_2, \dots,f_n$ be a finite sequence of functions such that the range of $f_i$ is contained in the domain of $f_{i+1}$, $1\le i\le n-1$, the consecutive composition of $\{f_i\}_{i=1}^n$ is defined to be function
$$
\bigodot_{i=1}^n f_i:=f_n\circ f_{n-1}\circ\cdots\circ f_2\circ f_1,
$$
whose domain is that of $f_1$.
\end{definition}

Using the above notation, equations \eqref{neuralnetworks-termk} and \eqref{neuralnetworks-output} may be rewritten as
$$
x^{(k)}=\left(\bigodot_{i=1}^k \sigma(\bW_i \cdot+\bbb_i)\right)(x),\ \ 1\le k\le n
$$
and
$$
y=\bW_{o}\left(\bigodot_{i=1}^n \sigma(\bW_i \cdot+\bbb_i)\right)(x)+b_{o},\ \ x\in[0,1]^d,
$$
respectively.
We are concerned with convergence of the above functions determined by the deep neural network as $n$ increases to infinity. One sees that the output layer is a linear function of $x^{(n)}$ and thus, it will not be considered in the convergence. We are hence concerned with the convergence of the deep neural network defined by
$$
\cN_n(x):=\left(\bigodot_{i=1}^n \sigma(\bW_i \cdot+\bbb_i)\right)(x),\ \ x\in[0,1]^d
$$
as $n$ tends to infinity. Note that $\cN_n$ is a function from $[0,1]^d$ to $\bR^{m_n}$ and $m_n$ is non-decreasing on $n$, unlike the scenario considered in \cite{XuZhang} where the matrix size and vector size were fixed. While in the present case, DNNs $\cN_n$ are vector-valued functions in $\bR^{m_n}$. In other words, the output dimension of $\cN_n$ is not fixed but grows as $n$ increases.
This brings difficulties for investigating the function sequence $\cN_n$ as $n$ changes.
To overcome the difficulty, we shall extend each $\cN_n$ as a vector in the infinite dimensional space $\ell^p$ for $1\le p\le+\infty$, the space of infinite sequences that are $\ell^p$ summable, by adding zeros to its tail, and consider convergence of $\cN_n$ as a vector-valued function in an appropriate topology related to $\ell^p$. Specifically, for $x\in [0,1]^d$, we define
$$
\tilde{\cN}_n(x)_j:=\cN_n(x)_j, \mbox{ for } 1\le j\le m_n\mbox{ and }\tilde{\cN}_n(x)_j:=0, \mbox{ for }j>m_n.
$$
In this way, convergence of $\cN_n$ in a finite dimensional space $\bR^{m_n}$ is recast as convergence of $\tilde{\cN}_n$ in the infinite space $\ell^p$.

Convergence of $\tilde{\cN}_n$ should be considered in an appropriate topology related to  $\ell^p$. We now specify it.
By $\|\cdot\|_p$ we denote the $\ell^p$-norm, $1\le p\le+\infty$. For a Lebesgue measurable subset $\Omega\subseteq\bR^d$, by $L^q(\Omega,\ell^p)$ we denote the space of all real-valued functions $f:\Omega\to\ell^p$ such that each component of $f$ is Lebesgue measurable on $\Omega$ and such that
$$
\|f\|_{L^q(\Omega,\ell^p)}:=\left\{
\begin{array}{ll}
\displaystyle{\biggl(\int_{\Omega}\|f(x)\|_p^q dx\biggr)^{1/q}},&1\le  q<+\infty,\\
\displaystyle{\ess\sup_{x\in\Omega} \|f(x)\|_p},&q=+\infty
\end{array}
\right.
$$
is finite. Note that $L^q(\Omega,\ell^p)$ is a Banach space.

\begin{definition} {\bf (Convergence of neural networks)} Let $\bW:=\{\bW_n\}_{n=1}^\infty$ with $\bW_n\in\bR^{m_n\times m_{n-1}}$, $n\in\bN$ be a sequence of weight matrices, and $\bbb:=\{\bbb_n\}_{n=1}^\infty$ with $\bbb_n\in\bR^{m_n}$ be a sequence of bias vectors. We say the deep ReLU network $\cN_n$ determined by $\bW$, $\bbb$ converges to a limit function $\cN$ in $L^q([0,1]^d,\ell^p)$  if
$$
\lim_{n\to\infty} \|\tilde{\cN}_n-\cN\|_{L^q([0,1]^d,\ell^p)}=0.
$$
We say $\cN_n$ converges pointwise to some function $\cN$ if for each $x\in[0,1]^d$,
$$
\lim_{n\to\infty} \|\tilde{\cN}_n(x)-\cN(x)\|_{p}=0.
$$
\end{definition}

We shall study conditions on the weight matrices and the bias vectors that ensure convergence of the deep ReLU network. These results will extend those established in \cite{XuZhang} for the case with fixed matrix sizes. This extension is nontrivial. As explained earlier, the matrix size and vector size considered in \cite{XuZhang} were fixed, and as a result, DNNs for this case were treated as finite-dimensional vector-valued functions. Now in the current situation, due to the growth of the matrix size, DNNs are vector-valued functions in $\ell^p$.

We introduce an algebraic formulation of a deep ReLU network by adapting the notions of activation domains and activation matrices introduced in \cite{XuZhang}.
For each $m\in\bN$, we define the set of {\bf activation matrices} by
$$
\cD_m:=\left\{\diag(a_1,a_2,\dots,a_m):a_i\in\{0,1\},1\le i\le m\right\}.
$$
The support of an activation matrix $J\in \cD_m$ is defined by
$$
\supp J:=\{k:\ J_{kk}=1,\ \ 1\le k\le m\}.
$$
The set $\cD_m$ of the activation matrices has exactly $2^m$ elements. This matches the number of possible different activation patterns of a ReLU neural network.
Therefore, it is convenient to use $\cD_m$ as an index set. We need the notion of activation domains of one layer network, which was originally introduced in \cite{XuZhang}.

\begin{definition} {\bf (Activation domains of one layer network) } \label{ActDomofOne}
For a weight matrix $\bW\in \bR^{m\times m'}$ and a bias vector $\bbb\in\bR^m$, the activation domain of $\sigma(\bW x+\bbb)$ with respect to a diagonal matrix $J\in\cD_m$ is
$$
D_{J,\bW,\bbb}:=\left\{x\in\bR^{m'}: (\bW x+\bbb)_j>0\ \mbox{ for }j\in\supp J\mbox{ and }(\bW x+\bbb)_j\le0\mbox{ for }j\notin\supp J\right\}.
$$
\end{definition}

Definition \ref{ActDomofOne} enables us to construct a partition of the unit cube $[0,1]^d$ that corresponds to the piecewise linear component of the one-layer neural network function
$$
\cN_1(x):=\sigma(\bW_1x+\bbb_1),\ \ x\in[0,1]^d.
$$
Specifically, we have
\begin{equation}\label{partition}
[0,1]^d=\bigcup_{J_1\in\cD_{m_1}}(D_{J_1,\bW_1, \bbb_1}\cap [0,1]^d)
\end{equation}
and
\begin{equation}\label{expressionofN1}
\cN_1(x)=J_1(\bW_1x+\bbb_1), \ \ x\in D_{J_1,\bW_1, \bbb_1}\cap [0,1]^d, \ \ \mbox{for}\ \ J_1\in \cD_{m_1}.
\end{equation}
Note that in equation \eqref{partition}, the set $\cD_{m_1}$ is used as an index set.
The essence of equation \eqref{expressionofN1} is that the application of the ReLU activation function in the definition of the one-layer neural network  $\cN_1$ is replaced by multiplication with an activation matrix in $\cD_{m_1}$. We refer readers to \cite{XuZhang} for more explanation of activation domains.

We next introduce the activation domain of a multi-layer network with non-square weight matrices, which extends the activation domain of a multi-layer network with square matrices originated in  \cite{XuZhang}.

\begin{definition}\label{multiactivationdomain} {\bf (Activation domains of a multi-layer network)}
For
$$
\barW_n:=(\bW_1,\dots,\bW_n)\in\prod_{i=1}^n\bR^{m_i\times m_{i-1}},\ \ \mbox{and}\ \ \barb_n:=(\bbb_1,\dots,\bbb_n)\in\prod_{i=1}^n \bR^{m_i},
$$
the activation domain of
$$
\bigodot_{i=1}^n\sigma(\bW_i\cdot+\bbb_i)
$$
with respect to $\barJ_n:=(J_1,\dots,J_n)\in \prod_{i=1}^n\cD_{m_i}$ is defined recursively by
$$
D_{\barJ_1,\barW_1,\barb_1}= D_{J_1,\bW_1,\bbb_1}\cap[0,1]^d
$$
and
$$
D_{\barJ_n,\barW_n,\barb_n}=\left\{x\in D_{\barJ_{n-1},\barW_{n-1},\barb_{n-1}}:\ \biggl(\bigodot_{i=1}^{n-1}\sigma(\bW_i\cdot+\bbb_i)\biggr)(x)\in D_{J_n,\bW_n,\bbb_n}\right\}.
$$
\end{definition}

For each positive integer $n$, the activation domains
$$
D_{\barJ_n,\barW_n,\barb_n},\ \  \mbox{for}\ \ \barJ_n:=(J_1,\dots,J_n)\in \prod_{i=1}^n\cD_{m_i},
$$
form a partition of the unit cube $[0,1]^d$. By using these activation domains, we are able to write down an explicit expression of the ReLU network $\cN_n$ with applications of the ReLU activation function replaced by multiplications with the activation matrices.
To this end, we write
$$
\prod_{i=1}^n\bW_i=\bW_n\bW_{n-1}\cdots\bW_1.
$$
For $n,k\in \bN$, we also adopt the following convention that
$$
\prod_{i=k}^n\bW_i=\bW_n\bW_{n-1}\cdots\bW_k, \ \ \mbox{for} \ \  n\geq k, \ \ \mbox{and}\ \
\prod_{i=k}^n\bW_i= \bI_{m_{n}}, \ \ \mbox{for} \ \  n< k,
$$
where for each $m$, ${ \bI_m}$ denotes the $m\times m$ identity matrix.

\begin{theorem}\label{expressionofcnprop}
It holds that
\begin{equation}\label{compositions2}
\cN_n(x)=\biggl(\prod_{i=1}^nJ_i\bW_i\biggr)x+\sum_{i=1}^n\biggl(\prod_{k=i+1}^n J_k\bW_k\biggr)J_i\bbb_i,\ \ x\in D_{\barJ_n,\barW_n,\barb_n},  \ \ \barJ_n:=(J_1,\dots,J_n)\in \prod_{i=1}^n \cD_{m_i}.
\end{equation}
\end{theorem}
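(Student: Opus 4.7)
The plan is to prove the formula by induction on $n$, using Definition \ref{multiactivationdomain} to pass from $\cN_{n-1}$ to $\cN_n$ while replacing each application of $\sigma$ by multiplication with the appropriate activation matrix.

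For the base case $n=1$, the statement reduces directly to equation \eqref{expressionofN1}. Indeed, if $x\in D_{\barJ_1,\barW_1,\barb_1}=D_{J_1,\bW_1,\bbb_1}\cap[0,1]^d$, then $\cN_1(x)=J_1(\bW_1 x+\bbb_1)=J_1\bW_1 x+J_1\bbb_1$, which matches the right-hand side of \eqref{compositions2} once the empty product $\prod_{k=2}^1 J_k\bW_k=\bI_{m_1}$ convention is applied.

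For the inductive step, suppose \eqref{compositions2} holds for $n-1$ and let $x\in D_{\barJ_n,\barW_n,\barb_n}$. By Definition \ref{multiactivationdomain}, such an $x$ lies in $D_{\barJ_{n-1},\barW_{n-1},\barb_{n-1}}$ and, moreover, $\cN_{n-1}(x)\in D_{J_n,\bW_n,\bbb_n}$. The first property allows me to invoke the inductive hypothesis for $\cN_{n-1}(x)$, and the second property is exactly the condition under which $\sigma(\bW_n\cN_{n-1}(x)+\bbb_n)=J_n(\bW_n\cN_{n-1}(x)+\bbb_n)$, by the same reasoning that produced \eqref{expressionofN1}. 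Combining these two facts,
\begin{equation*}
\cN_n(x)=J_n\bW_n\cN_{n-1}(x)+J_n\bbb_n = J_n\bW_n\biggl[\biggl(\prod_{i=1}^{n-1}J_i\bW_i\biggr)x+\sum_{i=1}^{n-1}\biggl(\prod_{k=i+1}^{n-1}J_k\bW_k\biggr)J_i\bbb_i\biggr]+J_n\bbb_n.
\end{equation*}
Distributing $J_n\bW_n$ folds one more factor into each product, yielding $\prod_{i=1}^n J_i\bW_i$ in front of $x$ and $\prod_{k=i+1}^n J_k\bW_k$ in front of $J_i\bbb_i$ for $1\le i\le n-1$, while the trailing $J_n\bbb_n$ supplies the missing $i=n$ term (again using the empty-product convention $\prod_{k=n+1}^n J_k\bW_k=\bI_{m_n}$).

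The argument involves no real obstacle beyond careful bookkeeping; the only point that needs attention is the recursive definition of the multi-layer activation domain, which must be unpacked exactly as above to guarantee simultaneously that the inductive hypothesis is applicable at step $n-1$ and that $\sigma$ can legitimately be replaced by $J_n$ at the $n$-th layer. Once those two compatibilities are verified, the identity follows by direct algebraic manipulation using the product conventions fixed just before the theorem.
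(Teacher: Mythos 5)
Your induction is correct and is exactly the argument the paper intends: the paper's own ``proof'' merely defers to Theorem 3.4 of \cite{XuZhang}, whose argument is precisely this induction on $n$ using the recursive definition of $D_{\barJ_n,\barW_n,\barb_n}$ to justify replacing $\sigma$ at the $n$-th layer by $J_n$ and then folding $J_n\bW_n$ into the products. Your write-up supplies the details the paper omits by reference, including the correct use of the empty-product convention for the $i=n$ bias term.
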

\begin{proof}
The proof is similar to that of Theorem 3.4 in \cite{XuZhang}.
\end{proof}

Theorem \ref{expressionofcnprop} differs from  Theorem 3.4 in \cite{XuZhang} in sizes of the matrices involved in the representation of the network.
The representation of a deep ReLU network established in Theorem \ref{expressionofcnprop} is helpful in reformulating convergence of deep ReLU networks as convergence of infinite products of matrices.
For $m'\ge m$, we define the $m'\times m$ real matrix
$$
\bI_{m',m}:=\left(\begin{array}{c}\bI_m\\{\bf 0}\end{array}\right),
$$
where ${\bf 0}$ denotes the $(m'-m)\times m$ zero matrix. Moreover, we use $\bI_{\infty,m}$ for the matrix with infinite rows and $m$ columns whose top $m\times m$ submatrix equals to $\bI_m$ and whose remaining entries are all zero. With the notation, one sees that
\begin{equation}\label{extension2}
\tilde{\cN}_n(x)=\bI_{\infty,m_n}\cN_n(x),\ \ x\in[0,1]^d.
\end{equation}
In other words, the extension of $\cN_n$ to $\tilde{\cN}_n$ is accomplished by multiplication of $\cN_n$ with $\bI_{\infty,m_n}$ from the left.

Convergence of the deep ReLU neural networks $\cN_n$ may be characterized in terms of the weight matrices and bias vectors.
Let $\bW:=\{\bW_n\}_{n=1}^\infty$ with $\bW_n\in \bR^{m_n\times m_{n-1}}$ be a sequence of weight matrices and $\bbb:=\{\bbb_n\}_{n=1}^\infty$ with $\bbb_n\in\bR^{m_n}$ be a sequence of bias vectors. Suppose that $\cN\in L^q([0,1]^d,\ell^p)$.
It follows from Theorem \ref{expressionofcnprop} that the ReLU neural networks $\cN_n$ converge to $\cN$ in $L^q([0,1]^d,\ell^p)$ if and only if for $1\leq q<+\infty$
$$
    \lim_{n\to\infty}\sum_{\barJ_n\in \prod_{i=1}^n\cD_{m_i}}\int_{D_{\barJ_n,\barW_n,\barb_n}}\biggl\|
    \bI_{\infty,m_n}\biggl(\prod_{i=1}^nJ_i\bW_i\biggr)x+ \bI_{\infty,m_n}\sum_{i=1}^n\biggl(\prod_{k=i+1}^n J_k\bW_k\biggr)J_i\bbb_i-\cN(x)
    \biggr\|_p^qdx=0,
$$
and for $q=\infty$
$$
    \lim_{n\to\infty}\max_{\barJ_n\in \prod_{i=1}^n\cD_{m_i}}\esssup_{x\in D_{\barJ_n,\barW_n,\barb_n}}\biggl\|
    \bI_{\infty,m_n}\biggl(\prod_{i=1}^nJ_i\bW_i\biggr)x+ \bI_{\infty,m_n}\sum_{i=1}^n\biggl(\prod_{k=i+1}^n J_k\bW_k\biggr)J_i\bbb_i-\cN(x)
    \biggr\|_{p}=0.
$$

This necessary and sufficient condition leads to a useful sufficient condition for pointwise convergence of deep ReLU neural networks with increasing widths.

The space $\cB(\bR^m,\bR^{m'})$ of bounded linear operators from $\bR^m$ to $\bR^{m'}$ coincides with the space of all $m'\times m$ real matrices with the induced matrix norm
$$
\|A\|_p:=\sup_{x\in\bR^m,x\ne0}\frac{\|Ax\|_p}{\|x\|_p},\ \ \mbox{for} \ \ A\in \bR^{m'\times m}.
$$
In the right hand side of the above equation, $\|\cdot\|_p$ denotes the $\ell^p$-norm restricted to $\bR^m$ or $\bR^{m'}$.
The space $\cB(\bR^m,\ell^p)$ consists of all bounded linear operators from $\bR^m$ to $\ell^p$ and each operator in $\cB(\bR^m,\ell^p)$ may be expressed as a matrix with infinite rows and $m$ columns. For each $A\in \cB(\bR^m,\ell^p)$, its operator norm, denoted by $\|A\|_p$, is given by
$$
\|A\|_p:=\sup_{x\in\bR^m,x\ne0}\frac{\|Ax\|_p}{\|x\|_p},\ \ \mbox{for} \ \  A\in \cB(\bR^m,\ell^p).
$$
Note that the embedding of $\bR^m$ to $\ell^p$ corresponds to $\bI_{\infty,m}$. Also, both $\cB(\bR^m,\bR^{m'})$ and $\cB(\bR^m,\ell^p)$ are Banach spaces, \cite{Lax}.

\begin{theorem}\label{convergenceRELU}
Let $\bW:=\{\bW_n\}_{n=1}^\infty$ with $\bW_n\in \bR^{m_n\times m_{n-1}}$ be a sequence of weight matrices and $\bbb:=\{\bbb_n\}_{n=1}^\infty$ with $\bbb_n\in\bR^{m_n}$ be a sequence of bias vectors. If for all sequences of diagonal matrices $\bJ=(J_n\in\cD_{m_n}:\ n\in\bN)$, the two limits
\begin{equation}\label{limit1}
\prod_{i=1}^\infty J_i\bW_i:=\lim_{n\to\infty}\bI_{\infty,m_n}\prod_{i=1}^nJ_i\bW_i\quad\mbox{ in }\cB(\bR^d,\ell^p)
\end{equation}
and
\begin{equation}\label{limit2}
\lim_{n\to\infty}\bI_{\infty,m_n}\sum_{i=1}^n\left(\prod_{k=i+1}^n J_k\bW_k\right)J_i\bbb_i \quad\mbox{ in }\ell^p
\end{equation}
both exist, then the sequence of ReLU neural networks $\{\cN_n\}_{n=1}^\infty$ converges pointwise on $[0,1]^d$.
\end{theorem}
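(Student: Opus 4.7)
The plan is to exploit Theorem \ref{expressionofcnprop} to reduce pointwise convergence of $\cN_n$ to the two hypothesized matrix-theoretic limits. The key observation is that, for a fixed input $x$, the activation pattern in each layer is deterministic, so a single infinite sequence of activation matrices governs the evaluation of $\cN_n$ at $x$ for all $n$ simultaneously.

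Fix $x \in [0,1]^d$. Since $\{D_{\barJ_n,\barW_n,\barb_n} : \barJ_n \in \prod_{i=1}^n \cD_{m_i}\}$ is a partition of $[0,1]^d$ for every $n$, and since the recursive Definition \ref{multiactivationdomain} gives $D_{\barJ_n, \barW_n, \barb_n} \subseteq D_{\barJ_{n-1}, \barW_{n-1}, \barb_{n-1}}$ (so the $n$-layer partition refines the $(n-1)$-layer partition), there is a unique infinite sequence $\bJ_x := (J_n^x)_{n \in \bN}$ with $J_n^x \in \cD_{m_n}$ such that $x \in D_{(J_1^x,\dots,J_n^x), \barW_n, \barb_n}$ for every $n$. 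Applying Theorem \ref{expressionofcnprop} along this sequence and combining with the extension identity \eqref{extension2} yields
$$
\tilde{\cN}_n(x) = \bI_{\infty, m_n}\biggl(\prod_{i=1}^n J_i^x \bW_i\biggr) x + \bI_{\infty, m_n} \sum_{i=1}^n \biggl(\prod_{k=i+1}^n J_k^x \bW_k\biggr) J_i^x \bbb_i.
$$

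I would then invoke the two standing hypotheses with this specific sequence $\bJ_x$. Since \eqref{limit1} asserts operator-norm convergence in $\cB(\bR^d, \ell^p)$, and operator-norm convergence implies pointwise (strong) convergence of the operators, the first term above converges in $\ell^p$ when evaluated at our fixed $x$; the second term converges in $\ell^p$ directly by \eqref{limit2}. Summing the two limits produces a vector $\cN(x) \in \ell^p$ with $\|\tilde{\cN}_n(x) - \cN(x)\|_p \to 0$, which is exactly the required pointwise convergence on $[0,1]^d$.

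The main obstacle, and essentially the only nontrivial step, is the bookkeeping in the second paragraph: establishing that a single infinite sequence $\bJ_x$ encodes the activation pattern at $x$ across all depths, so that the same $\bJ_x$ can be substituted into both hypotheses. Once this refinement property of the activation-domain partitions is confirmed from Definition \ref{multiactivationdomain}, the remaining argument is a direct substitution into the formula of Theorem \ref{expressionofcnprop} followed by passage to the limit in the appropriate topology for each of the two terms.
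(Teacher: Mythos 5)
Your proposal is correct and follows essentially the same route as the paper's own proof: fix $x$, use the nested activation-domain partitions to extract a single sequence $\bJ_x$ with $x\in\bigcap_{n}D_{\barJ_n,\barW_n,\barb_n}$, substitute into the formula of Theorem \ref{expressionofcnprop} via \eqref{extension2}, and pass to the limit term by term using the two hypotheses. The only difference is that you spell out the refinement property and the operator-norm-to-strong-convergence step, which the paper leaves implicit.
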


\begin{proof}
For every $x\in[0,1]^d$, there exists a sequence of matrices $\bJ=(J_n\in\cD_{m_n}:\ n\in\bN)$ such that
$$
x\in\bigcap_{n=1}^\infty D_{\barJ_n,\barW_n,\barb_n}.
$$
Thus, by (\ref{compositions2}) and (\ref{extension2}), we have
$$
\tilde{\cN}_n(x)=\bI_{\infty,m_n}\biggl(\prod_{i=1}^nJ_i\bW_i\biggr)x+\bI_{\infty,m_n}\sum_{i=1}^n\biggl(\prod_{k=i+1}^n J_k\bW_k\biggr)J_i\bbb_i,\ \ n\in\bN.
$$
Therefore, the existence of the two limits (\ref{limit1}) and (\ref{limit2}) are sufficient for pointwise convergence of $\{\cN_n(x)\}$ by the completeness of $\ell^p$.
\end{proof}

\section{Infinite Products of Matrices with Increasing Sizes}
\setcounter{equation}{0}

By Theorem \ref{convergenceRELU}, existence of the two limits (\ref{limit1}) and (\ref{limit2}) serves as a sufficient condition to ensure pointwise convergence  of deep ReLU neural networks. In particular, convergence of the infinite product of matrices
\begin{equation}\label{infinitematrixproducts}
\prod_{n=1}^\infty J_n\bW_n, \ \ \mbox{for any}\ \ J_n\in \cD_{m_n},
\end{equation}
with increasing sizes, appears in both of the limits. We hence study this important issue in this section. Note that the infinite product of matrices \eqref{infinitematrixproducts} differs from (4.1) of \cite{XuZhang} in the sizes of matrices $\bW_n$. Here, they are non-square matrices, and while in \cite{XuZhang} they are square matrices of a fixed size. Due to this fact, results of \cite{XuZhang} are not directly applicable to \eqref{infinitematrixproducts} here.

Note that for each $n\in\bN$,
$$
\prod_{i=1}^nJ_i\bW_i
$$
is of size $m_n\times d$. Thus, for different $n$, these matrices may have different numbers of rows. Connecting to our main problem of convergence of ReLU networks in Theorem \ref{convergenceRELU}, we shall treat each matrix $\prod_{k=1}^n J_k\bW_k$ as a linear operator in $\cB(\bR^d,\ell^p)$ defined as
$$
\bI_{\infty,m_n}\biggl(\prod_{i=1}^nJ_i\bW_i\biggr)
$$
and consider their convergence in $\cB(\bR^d,\ell^p)$. In conclusion, the limit (\ref{infinitematrixproducts}) is defined as in (\ref{limit1}).

Infinite products of square matrices have been well studied (see, for example, \cite{Artzrouni,Wedderburn}). There is a well-known sufficient condition for such infinite products to converge (\cite{Wedderburn}, page 127). The result states that if matrices $\bW_n\in \bR^{m\times m}$ have the form
\begin{equation}\label{MatrixForm}
\bW_n=\bI_m+\bP_n
\end{equation}
with
\begin{equation}\label{ConditionMatrixPn}
\sum_{n=1}^\infty\|\bP_n\|<+\infty,
\end{equation}
then the infinite product $\prod_{n=1}^\infty \bW_n$ converges.
In condition \eqref{ConditionMatrixPn}, $\|\cdot\|$ denotes a matrix norm having the property that
$$
\|AB\|\le \|A\|\|B\|,
$$
for all matrices $A$ and $B$ whose multiplication $AB$ is well-defined.
Our question differs from the above result in having the diagonal matrices $J_n$ in (\ref{infinitematrixproducts}) arbitrarily chosen from $\cD_{m_n}$, and differs from the result in \cite{XuZhang} in the aspect that the matrices $W_n$ in (\ref{infinitematrixproducts})  have increasing sizes.

To our best knowledge, the convergence issue of infinite products of non-square matrices with varying sizes has not been studied before.

We shall establish a sufficient condition for convergence of infinite products (\ref{infinitematrixproducts}) of non-square matrices with varying sizes. This poses extra difficulty compared to that in \cite{XuZhang}. Here, we have to carefully deal with three classes of matrices $\bI_{m_i,m_{i-1}}, J_i,\bW_i$. To this end, we shall always use $\|\cdot\|_p$ for the matrix norm induced from the $\ell^p$ vector norm. Thus, this matrix norm has the properties that
\begin{equation}\label{matrixcon1}
\|AB\|_p\le \|A\|_p\|B\|_p \mbox{ for all matrices }A,B,
\end{equation}
\begin{equation}\label{matrixcon2}
\|\bI_{m',m}\|_p=1 \mbox{ for all }m'\ge m,
\end{equation}
and
\begin{equation}\label{matrixcon3}
\|J\|_p\le 1 \mbox{ for all }J\in\cD_m,
\end{equation}

We need the following lemma which was proved in \cite{XuZhang}.

\begin{lemma}\label{sufficientlemma}
If a sequence $\{a_n\}_{n=1}^\infty$ satisfies $a_n\geq 0$ and $\sum_{n=1}^\infty a_n<+\infty$, then for all $q\in\bN$,
\begin{equation}\label{sufficientlemma2}
\sum_{i=q+1}^\infty a_i+\sum_{l=2}^\infty\sum_{\substack{1\le i_1<i_2<\cdots<i_l\\ i_l>q}}\prod_{k=1}^la_{i_k}\le \left(\sum_{i=q+1}^\infty a_i\right)\exp\left(\sum_{i=1}^\infty a_i\right).
\end{equation}
Consequently, if $a_i$, $1\le i\le n$, are nonnegative numbers, then for all $q<n$,
\begin{equation}\label{sufficientlemma1}
\sum_{i=q+1}^n a_i+\sum_{l=2}^n\sum_{\substack{1\le i_1<i_2<\cdots<i_l\le n\\ i_l>q}}\prod_{k=1}^la_{i_k}\le \left(\sum_{i=q+1}^na_i\right)\exp\left(\sum_{i=1}^na_i\right).
\end{equation}
\end{lemma}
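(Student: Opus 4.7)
The plan is to recognize the left-hand side as a sum over all finite strictly increasing tuples of positive integers whose largest index exceeds $q$, and then factor each such tuple according to that largest element. Setting $S:=\sum_{i=1}^\infty a_i$ and $T_q:=\sum_{i=q+1}^\infty a_i$, I first observe that the isolated leading term $T_q=\sum_{i>q}a_i$ on the left is precisely the $l=1$ contribution, so the entire left-hand side of \eqref{sufficientlemma2} can be rewritten uniformly as
$$
\sum_{l=1}^\infty\ \sum_{\substack{1\le i_1<\cdots<i_l\\ i_l>q}}\prod_{k=1}^l a_{i_k}.
$$

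Next I would factor each $l$-tuple by its largest element $i=i_l$, letting the remaining $l-1$ indices range strictly below $i$:
$$
\sum_{\substack{1\le i_1<\cdots<i_l\\ i_l>q}}\prod_{k=1}^l a_{i_k}=\sum_{i>q}a_i\sum_{1\le i_1<\cdots<i_{l-1}<i}\prod_{k=1}^{l-1}a_{i_k}.
$$
Dropping the constraint $i_{l-1}<i$ only enlarges the inner sum, and the standard multinomial expansion yields
$$
\Bigl(\sum_{i=1}^\infty a_i\Bigr)^{l-1}=\sum_{j_1,\ldots,j_{l-1}\ge 1}a_{j_1}\cdots a_{j_{l-1}}\ge (l-1)!\sum_{1\le i_1<\cdots<i_{l-1}}\prod_{k=1}^{l-1}a_{i_k},
$$
because every strictly increasing $(l-1)$-tuple appears as $(l-1)!$ distinct ordered tuples in the multinomial expansion of $S^{l-1}$. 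This gives the key per-layer bound
$$
\sum_{\substack{1\le i_1<\cdots<i_l\\ i_l>q}}\prod_{k=1}^l a_{i_k}\le T_q\cdot\frac{S^{l-1}}{(l-1)!}.
$$

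Summing over $l\ge 1$ then produces
$$
\sum_{l=1}^\infty T_q\cdot\frac{S^{l-1}}{(l-1)!}=T_q\sum_{m=0}^\infty\frac{S^m}{m!}=T_q\,e^S,
$$
which is exactly \eqref{sufficientlemma2}. For the finite consequence \eqref{sufficientlemma1}, I would simply extend the finite nonnegative sequence $a_1,\ldots,a_n$ by declaring $a_i:=0$ for $i>n$ and then quote the already-proved infinite case.

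There is no genuine obstacle here, since every step is either a clean combinatorial identity or an elementary upper bound; the only point worth care is the passage from strictly increasing tuples to the full multinomial expansion, which is precisely what produces the factor $1/(l-1)!$ that matches up with the Taylor series for $e^S$. I expect the proof in \cite{XuZhang} to proceed along essentially these lines.
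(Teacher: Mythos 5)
Your proof is correct. Note that the paper itself does not prove this lemma at all --- it is quoted with the remark that it ``was proved in \cite{XuZhang}'' --- so there is no in-paper argument to compare against; your route (rewrite the left-hand side as $\sum_{l\ge 1}$ over increasing tuples with largest index exceeding $q$, factor out the largest index, drop the upper constraint on the remaining $l-1$ indices, and bound the resulting elementary symmetric-type sum by $S^{l-1}/(l-1)!$ via the multinomial expansion, then sum the exponential series) is the natural and standard one, and the reduction of \eqref{sufficientlemma1} to \eqref{sufficientlemma2} by padding with zeros is valid.
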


We need a second observation regarding the product of activation matrices.

\begin{lemma}\label{sufficientlemma}
If $k\in\bN$ and $J_i\in\cD_{m_i}$ for $i\ge k$, then
\begin{equation}\label{limit00}
\lim_{n\to \infty}\prod_{i=k}^nJ_i\bI_{m_i,m_{i-1}}=\bI_{\infty,m_{k-1}}J'_{k},
\end{equation}
for some matrix $J'_k\in\cD_{m_{k-1}}$, and there exists a positive integer $N$ such that
\begin{equation}\label{sufficienteq1}
\prod_{i=k}^nJ_i\bI_{m_i,m_{i-1}}=\bI_{m_n,m_{k-1}}J'_{k}, \ \ \mbox{whenever}\ \ n>N.
\end{equation}
\end{lemma}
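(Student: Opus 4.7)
My approach rests on a single commutation identity between the diagonal $0$--$1$ matrices $J_i$ and the embedding matrices $\bI_{m_i,m_{i-1}}$. For any $J\in\cD_m$ and any $m'\le m$, a direct entrywise check gives
$$
J\,\bI_{m,m'} \;=\; \bI_{m,m'}\,\hat{J},
$$
where $\hat{J}\in\cD_{m'}$ is the leading $m'\times m'$ principal submatrix of $J$ (so $\hat{J}_{jj}=J_{jj}$ for $1\le j\le m'$). The reason is that the bottom $m-m'$ rows of $\bI_{m,m'}$ are zero, so the diagonal action of $J$ on those rows is irrelevant.

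Using this identity, I would start from the leftmost factor of $\prod_{i=k}^n J_i\bI_{m_i,m_{i-1}}=J_n\bI_{m_n,m_{n-1}}J_{n-1}\bI_{m_{n-1},m_{n-2}}\cdots J_k\bI_{m_k,m_{k-1}}$ and push every embedding matrix all the way to the left in succession. Each swap converts a $J_i$ to a $0$--$1$ diagonal of smaller size; when that matrix meets the next $J_{i-1}$ to its right, their product is again a $0$--$1$ diagonal (being the product of two diagonal $\{0,1\}$ matrices of equal size). The embedding factors then telescope via $\bI_{m_n,m_{n-1}}\bI_{m_{n-1},m_{n-2}}\cdots\bI_{m_k,m_{k-1}}=\bI_{m_n,m_{k-1}}$, yielding the representation
$$
\prod_{i=k}^n J_i\bI_{m_i,m_{i-1}} \;=\; \bI_{m_n,m_{k-1}}\,J'_k(n), \qquad J'_k(n)\in\cD_{m_{k-1}}.
$$

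Next I would show that the sequence $\{J'_k(n)\}_n$ is eventually constant. Writing $\prod_{i=k}^{n+1}=J_{n+1}\bI_{m_{n+1},m_n}\bigl(\prod_{i=k}^n\bigr)$, substituting the representation just obtained, using $\bI_{m_{n+1},m_n}\bI_{m_n,m_{k-1}}=\bI_{m_{n+1},m_{k-1}}$, and applying the swap identity once more gives the recursion
$$
J'_k(n+1) \;=\; \widehat{J_{n+1}}\,J'_k(n),
$$
where $\widehat{J_{n+1}}\in\cD_{m_{k-1}}$ is the leading $m_{k-1}\times m_{k-1}$ principal submatrix of $J_{n+1}$. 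Because $\widehat{J_{n+1}}$ has entries in $\{0,1\}$, the diagonal of $J'_k(n)$ is componentwise nonincreasing in $n$ and $\{0,1\}$-valued; at most $m_{k-1}$ entries can flip from $1$ to $0$, so there exists $N\in\bN$ with $J'_k(n)=J'_k(N)=:J'_k$ for all $n>N$. This is exactly \eqref{sufficienteq1}.

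The limit \eqref{limit00} then follows immediately: in the operator-theoretic interpretation set up in Section~3, for $n>N$ the embedded product equals $\bI_{\infty,m_n}\bI_{m_n,m_{k-1}}J'_k=\bI_{\infty,m_{k-1}}J'_k$, which is independent of $n$. The main bookkeeping obstacle is juggling the three distinct families of matrices $\bI_{m_i,m_{i-1}}$, $J_i$, and the cumulative $J'_k(n)$ through several different index sizes; once the commutation identity is recorded cleanly, the monotone stabilization of $\{J'_k(n)\}$ is routine.
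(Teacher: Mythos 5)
Your proposal is correct and follows essentially the same route as the paper's proof: your commutation identity $J\,\bI_{m,m'}=\bI_{m,m'}\hat{J}$ and the resulting recursion $J'_k(n+1)=\widehat{J_{n+1}}\,J'_k(n)$ are exactly the paper's computation that the partial products equal $\bI_{m_n,m_{k-1}}J$ with $\supp J=(\bigcap_{i=k}^n\supp J_i)\cap\bN_{m_{k-1}}$, and your componentwise-nonincreasing stabilization argument is the paper's observation that these reverse-nested subsets of the finite set $\bN_{m_{k-1}}$ eventually stabilize.
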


\begin{proof}
We first note that for all $i\ge k$, there holds
$$
J_{i+1}\bI_{m_{i+1},m_i}J_i\bI_{m_i,m_{i-1}}=\bI_{m_{i+1},m_{i-1}}J,
$$
where $J$ is the diagonal matrix in $\cD_{m_{i-1}}$ such that
$$
\supp J=\supp{J_{i+1}}\cap \supp J_i\cap\bN_{m_{i-1}}
$$
where
$$
\bN_k:=\{1,2,\dots, k\}.
$$
Consequently,
$$
\prod_{i=k}^nJ_i\bI_{m_i,m_{i-1}}=\bI_{m_{n},m_{k-1}}J
$$
where $J\in\cD_{m_{k-1}}$ satisfies
$$
\supp J=\biggl(\bigcap_{i=k}^n\supp J_i\biggr)\cap\bN_{m_{k-1}}.
$$
Since the set sequence $(\bigcap_{i=k}^n\supp J_i)\cap\bN_{m_{k-1}}$ is reverse nested as $n$ is increasing and is contained in the finite set $\bN_{m_{k-1}}$, that is,
$$
\biggl(\bigcap_{i=k}^{n+1}\supp J_i\biggr)\cap\bN_{m_{k-1}}\subseteq\biggl(\bigcap_{i=k}^n\supp J_i\biggr)\cap\bN_{m_{k-1}}\subseteq \bN_{m_{k-1}},
$$
we conclude that
$$
\lim_{n\to\infty}\biggl(\bigcap_{i=k}^n\supp J_i\biggr)\cap\bN_{m_{k-1}}
$$
exists and there exists some $N\in\bN$ such that for all $n>N$
$$
\biggl(\bigcap_{i=k}^n\supp J_i\biggr)\cap\bN_{m_{k-1}}=\lim_{n\to\infty}\biggl(\bigcap_{i=k}^n\supp J_i\biggr)\cap\bN_{m_{k-1}}.
$$
Let $J'_k$ be the diagonal matrix in $\cD_{m_{k-1}}$ such that
$$
\supp J'_k=\lim_{n\to\infty}\biggl(\bigcap_{i=k}^n\supp J_i\biggr)\cap\bN_{m_{k-1}}.
$$
This implies the validity of \eqref{limit00}.
Moreover, we get that for $n>N$
$$
\prod_{i=k}^nJ_i\bI_{m_i,m_{i-1}}=\bI_{m_{n},m_{k-1}}J'_k,
$$
which completes the proof.
\end{proof}

We are now ready to prove the main result of the section. The following result extends Theorem 4.3 of \cite{XuZhang} from square matrices of a fixed size to nonsquare matrices of increasing sizes.

\begin{theorem}\label{sufficientthm}
Let $\bW:=\{\bW_n\}_{n=1}^\infty$ with $\bW_n\in \bR^{m_n\times m_{n-1}}$. If
\begin{equation}\label{sufficient2}
\bW_n=\bI_{m_n,m_{n-1}}+\bP_n\mbox{ and }\sum_{n=1}^\infty\|\bP_n\|_p<+\infty,
\end{equation}
then the infinite product (\ref{infinitematrixproducts}) converges for all $J_n\in\cD_{m_n}$, $n\in\bN$.
\end{theorem}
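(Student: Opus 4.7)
The plan is to expand the finite product $A_n := \prod_{i=1}^n J_i\bW_i$ using the decomposition $\bW_i = \bI_{m_i,m_{i-1}} + \bP_i$ into a sum over subsets of $\{1,\dots,n\}$,
$$
A_n \;=\; \sum_{S\subseteq\{1,\dots,n\}} A_{n,S},
$$
where, for $S = \{i_1<i_2<\cdots<i_l\}$, the term $A_{n,S}$ is obtained from $J_n\bW_n\cdots J_1\bW_1$ by replacing the factor $\bW_i$ with $\bP_i$ when $i\in S$ and with $\bI_{m_i,m_{i-1}}$ when $i\notin S$. Submultiplicativity \eqref{matrixcon1} together with the unit-norm bounds \eqref{matrixcon2} and \eqref{matrixcon3} then yield the uniform estimate $\|A_{n,S}\|_p \le \prod_{i\in S}\|\bP_i\|_p$, with the empty product interpreted as $1$.

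I would next identify the candidate limit for $\bI_{\infty,m_n}A_n$ in $\cB(\bR^d,\ell^p)$ by invoking the second lemma of this section (the stabilization of $\prod_{i=k}^n J_i\bI_{m_i,m_{i-1}}$). For each fixed finite subset $S=\{i_1<\cdots<i_l\}$ of $\bN$, the ``top'' factor of $A_{n,S}$ coming from the indices $i>i_l$ is precisely $\prod_{i=i_l+1}^n J_i\bI_{m_i,m_{i-1}}$, which by that lemma (applied with $k=i_l+1$) equals $\bI_{m_n,m_{i_l}}J'_{i_l+1}$ for some fixed $J'_{i_l+1}\in\cD_{m_{i_l}}$ as soon as $n$ is sufficiently large. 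Multiplying by $\bI_{\infty,m_n}$ on the left removes the $n$-dependence, so $\bI_{\infty,m_n}A_{n,S}$ is eventually constant, equal to some $C_S\in\cB(\bR^d,\ell^p)$ satisfying $\|C_S\|_p\le\prod_{i\in S}\|\bP_i\|_p$. The natural candidate limit is
$$
L \;:=\; \sum_{\substack{S\subseteq\bN \\ |S|<\infty}} C_S,
$$
which converges absolutely in the Banach space $\cB(\bR^d,\ell^p)$, since $\sum_{|S|<\infty}\prod_{i\in S}\|\bP_i\|_p \le \exp\bigl(\sum_{i\ge 1}\|\bP_i\|_p\bigr)<+\infty$ by hypothesis \eqref{sufficient2}.

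To prove $\bI_{\infty,m_n}A_n\to L$, for each $q<n$ I would split both the finite sum for $\bI_{\infty,m_n}A_n$ and the series defining $L$ according to whether the indexing subset is contained in $\{1,\dots,q\}$ or meets $\{q+1,\dots,n\}$. By the stabilization step, for $n$ large relative to $q$ the subtotals over $S\subseteq\{1,\dots,q\}$ cancel term by term. The two remaining pieces---the partial-sum tail and the series tail---are then each controlled, via the first lemma of this section (the exponential-type inequality \eqref{sufficientlemma2} applied with $a_i=\|\bP_i\|_p$), by $\bigl(\sum_{i>q}\|\bP_i\|_p\bigr)\exp\bigl(\sum_{i\ge 1}\|\bP_i\|_p\bigr)$. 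Choosing $q$ large first and then letting $n\to\infty$ shows that $\{\bI_{\infty,m_n}A_n\}$ is Cauchy, and completeness of $\cB(\bR^d,\ell^p)$ delivers the desired limit, which is the convergence of \eqref{infinitematrixproducts} in the sense of \eqref{limit1}.

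The main obstacle will be the bookkeeping forced by the fact that $A_n\in\bR^{m_n\times d}$ lives in a space whose dimension grows with $n$, so no limit can be taken inside any fixed finite-dimensional matrix space. The conceptual move is to lift the whole computation to $\cB(\bR^d,\ell^p)$ via the isometric embeddings $\bI_{\infty,m_n}$ and to exploit the eventual constancy from the stabilization lemma to isolate a well-defined limit $C_S$ for each fixed finite subset $S$. Once this lift is in place, the subset-expansion argument runs in parallel with the fixed-width result in \cite{XuZhang}, with the exponential bound controlling the sums over subsets.
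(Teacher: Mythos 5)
Your proposal is correct and follows essentially the same route as the paper's proof: the subset expansion of $\prod_{i=1}^n(J_i\bI_{m_i,m_{i-1}}+J_i\bP_i)$, the stabilization lemma for products of the $J_i\bI_{m_i,m_{i-1}}$ to cancel the terms indexed by subsets of $\{1,\dots,q\}$, and the exponential-type inequality to bound the tails. The only cosmetic difference is that you assemble an explicit candidate limit $L=\sum_S C_S$ before verifying the Cauchy property, whereas the paper compares the partial products at indices $n$ and $n'$ directly.
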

\begin{proof} Since $\cB(\bR^d,\ell^p)$ is a Banach space, it suffices to show that $\{\bI_{\infty,m_{n}}\prod_{k=1}^n J_k \bW_k\}_{n=1}^\infty$ is a Cauchy sequence in $\cB(\bR^d,\ell^p)$. First notice that for $n'>n$,
\begin{equation}\label{sufficientthmeqs}
    \biggl\|\bI_{\infty,m_n}\prod_{i=1}^n J_i\bW_i-\bI_{\infty,m_{n'}}\prod_{i=1}^{n'} J_i\bW_i\biggr\|_p=\biggl\|\bI_{m_{n'},m_n}\prod_{i=1}^n J_i\bW_i-\prod_{i=1}^{n'} J_i\bW_i\biggr\|_p.
\end{equation}

By the second inequality of \eqref{sufficient2}, for a given  $\varepsilon>0$, there exists $q\in\bN$ such that
\begin{equation}\label{sufficientthmeq1}
\sum_{j=q+1}^\infty \|\bP_j\|_p<\varepsilon.
\end{equation}
By Lemma \ref{sufficientlemma}, when $n'>n$ are large enough, there exists $J'_k\in \cD_{m_{k-1}}$, $1\le k\le q+1$, such that
\begin{equation}\label{sufficientthmeq2}
\prod_{i=k}^nJ_i\bI_{m_i,m_{i-1}}=\bI_{m_n,m_{k-1}}J'_k\ \ \mbox{and}\ \ \prod_{i=k}^{n'}\bI_{m_i,m_{i-1}}=\bI_{m_{n'},m_{k-1}}J'_k,\ \ 1\le k\le q+1.
\end{equation}

With these preparations, we compute that
$$
\prod_{i=1}^n J_i\bW_i=\prod_{i=1}^n (J_i\bI_{m_i,m_{i-1}}+J_i\bP_i).
$$
Expanding the product on the right hand side of the above equation yields
\begin{equation}\label{sufficienteq3}
\prod_{i=1}^n J_i\bW_i=\prod_{i=1}^{n} J_i\bI_{m_i,m_{i-1}}
+\sum_{l=1}^{n}\sum_{1\le j_1<j_2<\cdots<j_l\le n}\biggl(\prod_{i=j_l+1}^nJ_i\bI_{m_i,m_{i-1}}\biggr)\biggl(\prod_{i=1}^l\biggl(J_{j_i}\bP_{j_i}\prod_{k=j_{i-1}+1}^{j_i-1}J_k\bI_{m_k,m_{k-1}}\biggr)\biggr),
\end{equation}
where $j_0:=0$. A key observation from (\ref{sufficientthmeq2}) is that the terms in
$$
\bI_{m_{n'},m_n}\sum_{1\le j_1<j_2<\cdots<j_l\le n}\left(\prod_{i=j_l+1}^nJ_i\bI_{m_i,m_{i-1}}\right)\left(\prod_{i=1}^l\left(J_{j_i}\bP_{j_i}\prod_{k=j_{i-1}+1}^{j_i-1}J_k\bI_{m_k,m_{k-1}}\right)\right),
$$
which appear in (\ref{sufficienteq3}) and those in
$$
\sum_{1\le j_1<j_2<\cdots<j_l\le n'}\left(\prod_{i=j_l+1}^{n'}J_i\bI_{m_i,m_{i-1}}\right)\left(\prod_{i=1}^l\left(J_{j_i}\bP_{j_i}\prod_{k=j_{i-1}+1}^{j_i-1}J_k\bI_{m_k,m_{k-1}}\right)\right)
$$
are identical if $j_l\le q$.

We now examine the difference
$$
\Delta:=\bI_{m_{n'},m_n}\prod_{i=1}^n J_i\bW_i-\prod_{i=1}^{n'} J_i\bW_i.
$$
Use equation (\ref{sufficienteq3}) with the fact pointed out above so that the identical terms appearing in the two products are canceled. Employing properties (\ref{matrixcon1}), (\ref{matrixcon2}), and (\ref{matrixcon3}) of the matrix norm, we obtain
$$
\begin{aligned}
\|\Delta\|_p\le &\sum_{j=q+1}^n \|\bP_j\|_p+\sum_{j=q+1}^{n'} \|\bP_j\|_p+\sum_{l=2}^{n}\sum_{\substack{1\le j_1<j_2<\cdots<j_l\le n\\j_l>q}}\prod_{k=1}^l\|\bP_{j_k}\|_p\\
&+\sum_{l=2}^{n}\sum_{\substack{1\le j_1<j_2<\cdots<j_l\le n'\\j_l>q}}\prod_{k=1}^l\|\bP_{j_k}\|_p+\sum_{l=n+1}^{n'}\sum_{1\le j_1<j_2<\cdots<j_l\le n'}\prod_{k=1}^l\|\bP_{j_k}\|_p.
\end{aligned}
$$
By inequality (\ref{sufficientlemma1}) in Lemma \ref{sufficientlemma}, the inequality above and (\ref{sufficientthmeq1}), we have for large enough integers $n'>n$ that
$$
\|\Delta\|_p\le 2 \biggl(\sum_{i=q+1}^{\infty}\|\bP_i\|_p\biggr)\exp\biggl(\sum_{i=1}^\infty\|\bP_i\|_p\biggr)\le 2\varepsilon\exp\left(\sum_{i=1}^\infty\|\bP_i\|_p\right).
$$
According to equation (\ref{sufficientthmeqs}), we confirm that $\{\bI_{\infty,m_{n}}\prod_{k=1}^n J_k\bW_k\}_{n=1}^\infty$ is a Cauchy sequence in $\cB(\bR^d,\ell^p)$, which completes the proof.
\end{proof}

By choosing $J_n=\bI_{m_n}$ for every $n\in\bN$ in (\ref{limit1}) and \eqref{infinitematrixproducts}, we obtain from Theorem \ref{sufficientthm} a sufficient condition for convergence of infinite products of matrices with increasing sizes. This result can be regarded as a natural generalization of the classical result for square matrices in (\cite{Wedderburn}, page 127) as mentioned above. We now present this result as a corollary of Theorem  \ref{sufficientthm}.

\begin{corollary}\label{classicalsufficient}
Let $\bW:=\{\bW_n\}_{n=1}^\infty$ with $\bW_n\in \bR^{m_n\times m_{n-1}}$. If
\begin{equation}\label{sufficient}
\bW_n=\bI_{m_n,m_{n-1}}+\bP_n\mbox{ and }\sum_{n=1}^\infty\|\bP_n\|_p<+\infty,
\end{equation}
then
$$
\prod_{n=1}^\infty \bW_n:=\lim_{n\to\infty}\bI_{\infty,m_n}\prod_{k=1}^n\bW_k
$$
converges in $\cB(\bR^d,\ell^p)$.
\end{corollary}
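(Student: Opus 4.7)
The plan is to derive Corollary \ref{classicalsufficient} as an immediate specialization of Theorem \ref{sufficientthm}. The key observation is that for each $m$, the full identity matrix $\bI_m$ is itself an element of $\cD_m$: it is the diagonal matrix whose diagonal entries are all $1$. Thus the choice $J_n := \bI_{m_n}$ for every $n \in \bN$ is a legitimate sequence of activation matrices to which Theorem \ref{sufficientthm} applies.

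With this choice, observe that $J_i \bW_i = \bI_{m_i}\bW_i = \bW_i$ for each $i$, so that for every $n$,
$$
\bI_{\infty,m_n}\prod_{i=1}^n J_i \bW_i = \bI_{\infty,m_n}\prod_{i=1}^n \bW_i.
$$
The hypothesis \eqref{sufficient} of the corollary is exactly the hypothesis \eqref{sufficient2} of Theorem \ref{sufficientthm}. Consequently, the conclusion of Theorem \ref{sufficientthm} asserts that the limit
$$
\lim_{n\to\infty}\bI_{\infty,m_n}\prod_{i=1}^n J_i \bW_i
$$
exists in $\cB(\bR^d,\ell^p)$, and under the substitution $J_i = \bI_{m_i}$ this is precisely the limit
$$
\lim_{n\to\infty}\bI_{\infty,m_n}\prod_{k=1}^n \bW_k,
$$
which by definition equals $\prod_{n=1}^\infty \bW_n$ in the sense of the corollary.

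There is no real obstacle here: the corollary is a straightforward specialization, included to highlight the relation to the classical sufficient condition for convergence of infinite products of square matrices recorded in \cite{Wedderburn}. The only minor subtlety worth mentioning is that $\bI_{m_n,m_{n-1}}$ (which appears in the defining relation $\bW_n = \bI_{m_n,m_{n-1}} + \bP_n$) is a rectangular matrix, not the square identity used to identify $J_n$ as an element of $\cD_{m_n}$; these two roles of "identity" should not be confused in the write-up.
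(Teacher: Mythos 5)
Your proposal is correct and matches the paper's own derivation exactly: the paper also obtains the corollary by taking $J_n=\bI_{m_n}\in\cD_{m_n}$ for every $n$ in Theorem \ref{sufficientthm}, noting that hypothesis \eqref{sufficient} coincides with \eqref{sufficient2}. Your closing remark distinguishing the rectangular $\bI_{m_n,m_{n-1}}$ from the square identity $\bI_{m_n}$ is a sensible clarification but not a substantive difference.
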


\section{Convergence of Deep Neural Networks with Increasing Widths}\label{ConvergenceDNN}
\setcounter{equation}{0}

Based on the results in Section 3, we shall investigate the convergence of a general ReLU network with increasing widths and then specify to the important convolutional neural networks. To this end, we first establish a useful sufficient condition guaranteeing the existence of limit (\ref{limit2}).

\begin{theorem}\label{theoremonlimit2}
Let $\bW:=\{\bW_n\}_{n=1}^\infty$ with $\bW_n\in \bR^{m_n\times m_{n-1}}$, and $\bbb:=\{\bbb_n\}_{n=1}^\infty$ with $\bbb_n\in\bR^{m_n}$ be a sequence of weight matrices and bias vectors, respectively. If
\begin{equation}\label{limit2sufficient3}
\sum_{n=1}^\infty \|\bbb_n\|_p<+\infty,
\end{equation}
\begin{equation}\label{limit2sufficient1}
\prod_{k=i}^\infty J_k\bW_k:=\lim_{n\to\infty}\bI_{\infty,m_n}\prod_{k=i}^n J_k\bW_k\mbox{ converges for every }i\ge1,
\end{equation}
and there exists a positive constant $C$ such that
\begin{equation}\label{limit2sufficient2}
\prod_{j=i}^n \|\bW_j\|_p\le C\mbox{ for all }1\le i\le n<+\infty,
\end{equation}
then the limit (\ref{limit2}) exists.
\end{theorem}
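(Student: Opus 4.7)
The strategy is to identify an explicit candidate for the limit in (\ref{limit2}), namely
\[
S_\infty := \sum_{i=1}^\infty \biggl(\prod_{k=i+1}^\infty J_k\bW_k\biggr) J_i \bbb_i,
\]
and then show that the partial sums
\[
S_n := \bI_{\infty,m_n}\sum_{i=1}^n \biggl(\prod_{k=i+1}^n J_k\bW_k\biggr) J_i \bbb_i
\]
converge to $S_\infty$ in $\ell^p$. Set $A_i^{(n)} := \bI_{\infty,m_n} \prod_{k=i+1}^n J_k\bW_k \in \cB(\bR^{m_i},\ell^p)$ so that $S_n = \sum_{i=1}^n A_i^{(n)} J_i \bbb_i$. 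Hypothesis (\ref{limit2sufficient1}), applied with index $i+1$ in place of $i$, tells us that $A_i^{(n)}$ converges in $\cB(\bR^{m_i},\ell^p)$ as $n\to\infty$ to some operator $A_i^{(\infty)}$.

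First I would establish the uniform bound $\|A_i^{(n)}\|_p \leq C$ for all $n\geq i$, which follows at once from submultiplicativity (\ref{matrixcon1}) together with (\ref{matrixcon2}), (\ref{matrixcon3}), and hypothesis (\ref{limit2sufficient2}). Passing to the limit in $n$ gives $\|A_i^{(\infty)}\|_p \leq C$ as well. Combined with the summability (\ref{limit2sufficient3}) of $\|\bbb_i\|_p$, this makes the candidate series $S_\infty$ absolutely convergent in the Banach space $\ell^p$, hence well-defined.

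Next, I would decompose
\[
S_\infty - S_n = \sum_{i=1}^n \bigl(A_i^{(\infty)} - A_i^{(n)}\bigr) J_i \bbb_i + \sum_{i=n+1}^\infty A_i^{(\infty)} J_i \bbb_i.
\]
The tail piece is bounded in $\ell^p$-norm by $C\sum_{i=n+1}^\infty \|\bbb_i\|_p$, which tends to $0$ by (\ref{limit2sufficient3}). The head piece I handle by a two-scale $\varepsilon$-splitting: for a threshold $N$, write $\sum_{i=1}^n = \sum_{i=1}^N + \sum_{i=N+1}^n$. The second group is uniformly dominated by $2C\sum_{i=N+1}^\infty \|\bbb_i\|_p$, which can be made arbitrarily small by choosing $N$ large. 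With $N$ then fixed, the first group has only finitely many terms, and for each of them $\|A_i^{(\infty)} - A_i^{(n)}\|_p \to 0$ as $n\to\infty$ by (\ref{limit2sufficient1}), so for $n$ sufficiently large this contribution is also small.

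The main delicacy is the interchange of the limit $n\to\infty$ with the sum $\sum_{i=1}^n$; this is really a dominated-convergence-for-series argument in the Banach space $\ell^p$. Hypothesis (\ref{limit2sufficient2}) is precisely what supplies the uniform-in-$n$ dominant $2C\|\bbb_i\|_p$. Without boundedness of the partial products $\prod_{j=i}^n\|\bW_j\|_p$, the operator norms $\|A_i^{(n)}\|_p$ for small $i$ and large $n$ could blow up and the $\varepsilon$-splitting would collapse; conversely, once this bound is in hand, the whole argument reduces to a standard $\varepsilon/2$-maneuver.
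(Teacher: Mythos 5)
Your proof is correct and follows essentially the same route as the paper's: the paper shows the sequence $\bc_n$ is Cauchy in $\ell^p$ by splitting at a threshold index $q$, using hypothesis (\ref{limit2sufficient1}) on the finitely many terms with $i\le q$ and the uniform bound from (\ref{limit2sufficient2}) together with (\ref{limit2sufficient3}) on the tails, which is exactly your $\varepsilon$-splitting. The only difference is presentational: you exhibit the limit $S_\infty$ explicitly and prove convergence to it, whereas the paper avoids naming the limit and invokes completeness of $\ell^p$.
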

\begin{proof}
It suffices to show that
$$
\bc_n:=\bI_{\infty,m_n}\sum_{i=1}^n\biggl(\prod_{k=i+1}^n J_k\bW_k\biggr)J_i\bbb_i, \ \ n\in\bN
$$
forms a Cauchy sequence in $\ell^p$. Let $\varepsilon>0$ be arbitrary. By condition (\ref{limit2sufficient3}), there exists some $q\in\bN$ such that
\begin{equation}\label{sumofbbbi}
    \sum_{i=q+1}^\infty \|\bbb_i\|_p<\varepsilon.
\end{equation}
By hypothesis (\ref{limit2sufficient1}), when $n, n'$ are big enough with $n'>n$, there holds for all $i=1,2, \dots, q$ that
\begin{equation}\label{convergenceCondition}
\biggl\|\bI_{\infty,m_{n'}}\prod_{k=i+1}^{n'}J_k\bW_k-\bI_{\infty,m_{n}}\prod_{k=i+1}^{n}J_k\bW_k\biggr\|_p\le \varepsilon.
\end{equation}
For such $n'>n>q$, we estimate $\|\bc_{n'}-\bc_{n}\|_p$. To this end, we let
$$
\bd_{n',n,q}:=\sum_{i=1}^q\biggl(\bI_{\infty,m_{n'}}\prod_{k=i+1}^{n'}J_k\bW_k-\bI_{\infty,m_{n}}\prod_{k=i+1}^{n}J_k\bW_k\biggr)J_i\bbb_i.
$$
By (\ref{convergenceCondition}), we have for big enough $n, n'$ with $n'>n$ that
\begin{equation}\label{dn'np}
\|\bd_{n',n,q}\|_p\le \sum_{i=1}^q\biggl\|\bI_{\infty,m_{n'}}\prod_{k=i+1}^{n'}J_k\bW_k-\bI_{\infty,m_{n}}\prod_{k=i+1}^{n}J_k\bW_k\biggr\|_p\|\bbb_i\|_p\le \varepsilon\sum_{i=1}^q\|\bbb_i\|_p.
\end{equation}
Note that
\begin{equation}\label{differencecn-cn'}
\bc_{n'}-\bc_{n}=\bd_{n',n,q}+\bI_{\infty,m_{n'}}\sum_{i=q+1}^{n'}\biggl(\prod_{k=i+1}^{n'}J_k\bW_k\biggr)J_i\bbb_i+\bI_{\infty,m_{n}}\sum_{i=q+1}^{n}\biggl(\prod_{k=i+1}^{n}J_k\bW_k\biggr)J_i\bbb_i.
\end{equation}
Using \eqref{differencecn-cn'} and employing \eqref{dn'np}, (\ref{matrixcon3}), (\ref{limit2sufficient2}), and \eqref{sumofbbbi}, we have for big enough $n'>n$ that
$$
\begin{aligned}
\|\bc_{n'}-\bc_{n}\|_p
&\le \|\bd_{n',n,q}\|_p+\sum_{i=q+1}^{n'}\biggl(\prod_{k=i+1}^{n'}\|\bW_k\|_p\biggr)\|\bbb_i\|_p+\sum_{i=q+1}^{n}\biggl(\prod_{k=i+1}^{n}\|\bW_k\|_p\biggr)\|\bbb_i\|_p\\
&\le \varepsilon\sum_{i=1}^q\|\bbb_i\|_p+2C\sum_{i=q+1}^{\infty}\|\bbb_i\|_p\\
&\le \varepsilon\biggl(\sum_{i=1}^q\|\bbb_i\|_p+2C\biggr).
\end{aligned}
$$
This shows that $\bc_n$ is a Cauchy sequence in $\ell^p$ and thus it converges.
\end{proof}

We are now able to present sufficient conditions for pointwise convergence of deep ReLU networks with increasing widths.

\begin{theorem}\label{finalsufficient}
Let $\bW:=\{\bW_n\}_{n=1}^\infty$ with $\bW_n\in \bR^{m_n\times m_{n-1}}$, and $\bbb:=\{\bbb_n\}_{n=1}^\infty$ with $\bbb_n\in\bR^{m_n}$ be a sequence of weight matrices and bias vectors, respectively. If the weight matrices satisfy
\begin{equation}\label{finalsufficientcon1}
   \bW_n=\bI_{m_n,m_{n-1}}+\bP_n, \ \  n\ge 1, \ \ \sum_{n=1}^\infty\|\bP_n\|_p<+\infty
\end{equation}
and the bias vectors $\bbb_i$, $i\in\bN$, satisfy
$$
\sum_{n=1}^\infty \|\bbb_n\|_p<+\infty,
$$
then the ReLU neural networks $\cN_n$ converge pointwise on $[0,1]^d$.
\end{theorem}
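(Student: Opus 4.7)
The plan is to combine Theorem \ref{convergenceRELU}, Theorem \ref{sufficientthm}, and Theorem \ref{theoremonlimit2}. By Theorem \ref{convergenceRELU}, pointwise convergence of $\cN_n$ on $[0,1]^d$ will follow once I verify, for every sequence of activation matrices $\bJ=(J_n\in\cD_{m_n}:\ n\in\bN)$, the existence of the two limits (\ref{limit1}) and (\ref{limit2}). The proof thus splits cleanly into two parts, handled respectively by the two preparatory theorems of Sections 4 and 5.

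For the first limit (\ref{limit1}), I would invoke Theorem \ref{sufficientthm} directly: hypothesis (\ref{finalsufficientcon1}) matches assumption (\ref{sufficient2}) verbatim, so the infinite product $\bI_{\infty,m_n}\prod_{i=1}^n J_i\bW_i$ converges in $\cB(\bR^d,\ell^p)$ for the given sequence $\bJ$. For the second limit (\ref{limit2}), I plan to apply Theorem \ref{theoremonlimit2}, which requires three hypotheses. The summability (\ref{limit2sufficient3}) of $\|\bbb_n\|_p$ is assumed outright. The convergence of the tail products $\prod_{k=i}^\infty J_k\bW_k$ for every $i\geq 1$ follows by applying Theorem \ref{sufficientthm} to the shifted sequence $\{\bW_n\}_{n\geq i}$; this shifted sequence still satisfies (\ref{sufficient2}) because $\sum_{n\geq i}\|\bP_n\|_p\leq\sum_{n=1}^\infty\|\bP_n\|_p<+\infty$.

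The only step with any computational content will be the uniform bound (\ref{limit2sufficient2}), which I expect to be the main (though modest) obstacle. Using submultiplicativity (\ref{matrixcon1}), the identity $\|\bI_{m_n,m_{n-1}}\|_p=1$ from (\ref{matrixcon2}), and the triangle inequality, I get $\|\bW_n\|_p\leq 1+\|\bP_n\|_p$, and hence
$$
\prod_{j=i}^n\|\bW_j\|_p\leq \prod_{j=1}^\infty(1+\|\bP_j\|_p)\leq \exp\biggl(\sum_{j=1}^\infty\|\bP_j\|_p\biggr)=:C<+\infty,
$$
uniformly in $1\leq i\leq n$. With all three hypotheses of Theorem \ref{theoremonlimit2} verified, limit (\ref{limit2}) exists. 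Combining this with the existence of limit (\ref{limit1}), Theorem \ref{convergenceRELU} then yields the pointwise convergence of $\{\cN_n\}$ on $[0,1]^d$, completing the proof.
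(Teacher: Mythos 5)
Your proposal is correct and follows essentially the same route as the paper: reduce to Theorem \ref{convergenceRELU}, obtain limit (\ref{limit1}) from Theorem \ref{sufficientthm}, and verify the three hypotheses of Theorem \ref{theoremonlimit2} for limit (\ref{limit2}), with the same bound $\prod_{j=i}^n\|\bW_j\|_p\le\exp\bigl(\sum_{j=1}^\infty\|\bP_j\|_p\bigr)$. Your justification of (\ref{limit2sufficient1}) by applying Theorem \ref{sufficientthm} to the shifted sequence $\{\bW_n\}_{n\ge i}$ is a slightly more explicit version of the paper's appeal to ``the proof of Theorem \ref{sufficientthm},'' but it is the same idea.
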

\begin{proof}
By Theorem \ref{convergenceRELU}, it suffices to show under the given conditions of this theorem, limits  (\ref{limit1}) and  (\ref{limit2}) exist for all $J_n\in\cD_{m_n}$, $n\in\bN$. First, by Theorem \ref{sufficientthm}, condition \eqref{finalsufficientcon1} ensures that limit (\ref{limit1}) exists for all $J_n\in\cD_{m_n}$, $n\in\bN$. It remains to confirm the existence of limit (\ref{limit2}). By the proof of Theorem \ref{sufficientthm}, (\ref{limit2sufficient1}) is satisfied under condition (\ref{finalsufficientcon1}). We can also verify that
$$
\prod_{j=i}^n \|\bW_j\|_p\le\prod_{j=i}^n (1+\|\bP_j\|_p)\le \prod_{j=i}^n\exp(\|\bP_j\|_p)\le \exp\biggl(\sum_{j=1}^\infty\|\bP_j\|_p\biggr),\ \ 1\le i\le n<+\infty.
$$
Therefore, condition (\ref{limit2sufficient2}) is also satisfied. By Theorem \ref{theoremonlimit2}, limit (\ref{limit2}) exists for all $J_n\in\cD_{m_n}$, $n\in\bN$.

We conclude that by Theorem \ref{convergenceRELU}, the deep ReLU network $\cN_n$ with increasing widths converges pointwise on $[0,1]^d$ as $n$ tends to infinity.
\end{proof}

\section{Convergence of Deep Convolutional Neural Networks}
\setcounter{equation}{0}

We now return to the main topic of the paper, which is to establish the convergence of deep convolutional neural networks. Specifically, in this section we  apply the result on convergence of deep ReLU neural networks with increasing widths obtained in Section \ref{ConvergenceDNN} to CNNs because as we have indicated in Section \ref{CNNs}, CNNs in the matrix form are  special cases of DNNs.

In our development, we will need to estimate the matrix norm of a matrix induced by the $\ell^p$ vector norm. This will be done by employing the Riesz-Thorin interpolation theorem (see, \cite{Folland}, page 200). We now recall the Riesz-Thorin interpolation theorem that for any matrix $A$ and $p\in[1,+\infty]$
\begin{equation}\label{RieszThorin}
    \|A\|_p\le \|A\|_1^{\frac1p}\|A\|_\infty^{1-\frac1p}.
\end{equation}

We first apply Theorem \ref{finalsufficient} directly to obtain convergence of CNNs.

\begin{theorem}\label{cnnconvergence1}
Let $\bw^{(n)}:=(\bw^{(n)}_0,\bw^{(n)}_1,\dots,\bw^{(n)}_{s_n})$, $s_n\in\bN$, $n\in\bN$, be a sequence of filter masks,  $\bbb:=\{\bbb_n\}_{n=1}^\infty$ with $\bbb_n\in\bR^{m_n}$ be a sequence of bias vectors, where the widths $m_n$ are defined by (\ref{cnnwidth}). If the bias vectors $\bbb_i$, $i\in\bN$, satisfy
$$
\sum_{n=1}^\infty \|\bbb_n\|_p<+\infty,
$$
and the filter masks satisfy
\begin{equation}\label{cnnsufficient1conPlus}
\bw^{(n)}_0=1\ \  \mbox{for all}\ \ n\in\bN
\end{equation}
and
 \begin{equation}\label{cnnsufficient1con2}
   \sum_{n=1}^\infty \sum_{j=1}^{s_n}|\bw^{(n)}_j|<+\infty,
\end{equation}
then the deep ReLU convolutional neural networks (\ref{cnn1}) converge pointwise on $[0,1]^d$.
\end{theorem}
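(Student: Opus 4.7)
The plan is to reduce Theorem \ref{cnnconvergence1} to Theorem \ref{finalsufficient} by verifying that the Toeplitz-type weight matrices $\bW_n$ defined in (\ref{cnnweightmatrix}) admit the decomposition $\bW_n = \bI_{m_n,m_{n-1}} + \bP_n$ with $\sum_{n=1}^\infty \|\bP_n\|_p < +\infty$. Since the hypothesis on the bias vectors is identical to the one in Theorem \ref{finalsufficient}, once this decomposition is verified pointwise convergence follows immediately.

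First, I would observe that under the assumption $\bw^{(n)}_0 = 1$, the entries $(\bW_n)_{jk}$ with $j = k$ (for $1 \le k \le m_{n-1}$) all equal $1$. Therefore defining $\bP_n := \bW_n - \bI_{m_n,m_{n-1}}$ yields a matrix whose $(j,k)$-entry is $\bw^{(n)}_{j-k}$ for $k+1 \le j \le \min(k+s_n, m_n)$ and $0$ otherwise.

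Second, I would estimate $\|\bP_n\|_p$ via the Riesz--Thorin interpolation inequality (\ref{RieszThorin}), which reduces the task to bounding the maximum absolute column sum $\|\bP_n\|_1$ and the maximum absolute row sum $\|\bP_n\|_\infty$. For any column index $k$ the nonzero entries are $\bw^{(n)}_1, \bw^{(n)}_2, \dots$ up to at most index $s_n$, so
\[
\|\bP_n\|_1 \le \sum_{j=1}^{s_n} |\bw^{(n)}_j|.
\]
The same upper bound holds for $\|\bP_n\|_\infty$ by a symmetric count along rows. Interpolating gives $\|\bP_n\|_p \le \sum_{j=1}^{s_n}|\bw^{(n)}_j|$, and summing over $n$ together with hypothesis (\ref{cnnsufficient1con2}) gives $\sum_{n=1}^\infty \|\bP_n\|_p < +\infty$.

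Finally, with the decomposition and summability condition verified, Theorem \ref{finalsufficient} applies directly and yields pointwise convergence of $\cN_n$ on $[0,1]^d$. The only step that requires any real care is the row/column count for the banded Toeplitz matrix $\bP_n$, but this is straightforward bookkeeping rather than a genuine obstacle; the conceptual work has already been done in Sections 4 and 5, and this theorem is essentially a clean specialization of Theorem \ref{finalsufficient} to the convolutional setting, with $\bw^{(n)}_0 = 1$ playing exactly the role of making the leading term of $\bW_n$ equal to the canonical embedding $\bI_{m_n,m_{n-1}}$.
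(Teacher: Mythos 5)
Your proposal is correct and follows essentially the same route as the paper's own proof: decompose $\bW_n=\bI_{m_n,m_{n-1}}+\bP_n$ using $\bw^{(n)}_0=1$, bound $\|\bP_n\|_1$ and $\|\bP_n\|_\infty$ by $\sum_{j=1}^{s_n}|\bw^{(n)}_j|$ via the explicit row/column-sum formulas, interpolate with Riesz--Thorin, and invoke Theorem \ref{finalsufficient}. No gaps.
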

\begin{proof}
For each $n\in\bN$, let $\bW_n$ be the weight matrix associated with the filter mask $\bw^{(n)}$ defined by (\ref{cnnweightmatrix}). Since $\bw^{(n)}_0=1$ for all $n\in\bN$, we may make the decomposition
$$
{\bW}_n=\bI_{m_n,m_{n-1}}+\bP_n
$$
where
$$
(\bP_n)_{jk}:=\left\{
\begin{aligned}
0,&\quad j\le k\\
\bw^{(n)}_{j-k},&\quad j> k,
\end{aligned}
\quad 1\le j\le m_n,1\le k\le m_{n-1}.
\right.
$$
Recall that the matrix norms \cite{Wedderburn} of a matrix $A\in\bR^{m'\times m}$ satisfy
\begin{equation}\label{matrixnorm1inf}
\|A\|_p=\left\{
\begin{array}{cc}
     \displaystyle{\max_{1\le i\le m'}\sum_{j=1}^m |A_{ij}|},&  p=+\infty,\\
     \displaystyle{\max_{1\le j\le m}\sum_{i=1}^{m'} |A_{ij}|},&  p=1.
\end{array}
\right.
\end{equation}
One hence sees that the matrix norms of $\bP_n$ satisfy
$$
\|\bP_n\|_p\le \sum_{j=1}^{s_n}|\bw^{(n)}_j|\mbox{ for both }p=1\mbox{ and }p=+\infty.
$$
By applying  the Riesz-Thorin interpolation theorem (\ref{RieszThorin}), for all $1\le p\le +\infty$, we find that the $p$-matrix norm of $\bP_n$ satisfies
\begin{equation}\label{boundonPn1}
\|\bP_n\|_p\le \sum_{j=1}^{s_n}|\bw^{(n)}_j|.
\end{equation}
It follows from \eqref{boundonPn1} that
\begin{equation}\label{boundonPn1**}
\sum_{n=1}^\infty \|\bP_n\|_p\le \sum_{n=1}^\infty \sum_{j=1}^{s_n}|\bw^{(n)}_j|.
\end{equation}
Combining \eqref{boundonPn1**} with condition (\ref{cnnsufficient1con2}) leads to the following result
$$
\sum_{n=1}^\infty \|\bP_n\|_p<+\infty.
$$
The result of this theorem now follows directly from Theorem \ref{finalsufficient}.
\end{proof}

The hypothesis \eqref{cnnsufficient1conPlus} appearing in Theorem \ref{cnnconvergence1} imposes a rather restricted condition on the filter masks of the CNNs. Below, we weaken this condition to obtain a more general convergence result. For this purpose, instead of applying  Theorem \ref{finalsufficient} directly to the weight matrices defined by the filter masks of the CNNs, we appeal to  Theorem \ref{convergenceRELU}.

\begin{theorem}\label{cnnconvergence}
Let $\bw^{(n)}:=(\bw^{(n)}_0,\bw^{(n)}_1,\dots,\bw^{(n)}_{s_n})$, $s_n\in\bN$, $n\in\bN$, be a sequence of filter masks,  $\bbb:=\{\bbb_n\}_{n=1}^\infty$ with $\bbb_n\in\bR^{m_n}$ be a sequence of bias vectors, where the widths $m_n$ are defined by (\ref{cnnwidth}). If the bias vectors $\bbb_i$, $i\in\bN$, satisfy
$$
\sum_{n=1}^\infty \|\bbb_n\|_p<+\infty,
$$
and the filter masks satisfy
\begin{equation}\label{cnnsufficientcon1}
   \prod_{n=1}^\infty \bw^{(n)}_0\mbox{ converges to a nonzero limit}
   \end{equation}
   and
 \begin{equation}\label{cnnsufficientcon2}
   \sum_{n=1}^\infty \frac{\displaystyle{\sum_{j=1}^{s_n}|\bw^{(n)}_j|}}{|\bw^{(n)}_0|}<+\infty,
\end{equation}
then the deep ReLU convolutional neural networks (\ref{cnn1}) converge pointwise on $[0,1]^d$.
\end{theorem}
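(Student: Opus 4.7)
The plan is to reduce the theorem to the setting already handled in Theorem \ref{cnnconvergence1} by a scalar normalization that strips the leading filter coefficient $c_n := \bw^{(n)}_0$ off each layer. First I would observe that since $\prod_n c_n$ converges to a nonzero limit by \eqref{cnnsufficientcon1}, each $c_n$ is nonzero and the partial products $C_n := \prod_{k=1}^n c_k$ stay uniformly bounded above and uniformly bounded away from $0$. Define the normalized mask $\tilde{\bw}^{(n)} := \bw^{(n)}/c_n$, whose leading entry is $1$ and whose tail satisfies $\sum_{j=1}^{s_n}|\tilde{\bw}^{(n)}_j| = \sum_{j=1}^{s_n}|\bw^{(n)}_j|/|c_n|$, so that \eqref{cnnsufficientcon2} becomes precisely the hypothesis \eqref{cnnsufficient1con2} applied to the normalized masks.

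Next I would pass to the matrix side. Let $\tilde{\bW}_n$ be the Toeplitz matrix of $\tilde{\bw}^{(n)}$ constructed via \eqref{cnnweightmatrix}; then $\bW_n = c_n \tilde{\bW}_n$ and $\tilde{\bW}_n = \bI_{m_n,m_{n-1}} + \tilde{\bP}_n$, where the Riesz--Thorin bound \eqref{RieszThorin}--\eqref{matrixnorm1inf} already used in the proof of Theorem \ref{cnnconvergence1} yields $\|\tilde{\bP}_n\|_p \le \sum_{j=1}^{s_n}|\bw^{(n)}_j|/|c_n|$, and hence $\sum_n \|\tilde{\bP}_n\|_p < \infty$. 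Since scalars commute with matrix multiplication, I obtain for every sequence $(J_i \in \cD_{m_i})_{i\in\bN}$ the two identities
\begin{equation*}
\prod_{i=1}^n J_i \bW_i \;=\; C_n \prod_{i=1}^n J_i \tilde{\bW}_i,\qquad
\sum_{i=1}^n\Bigl(\prod_{k=i+1}^n J_k \bW_k\Bigr)J_i \bbb_i \;=\; C_n\sum_{i=1}^n\Bigl(\prod_{k=i+1}^n J_k \tilde{\bW}_k\Bigr)J_i\tilde{\bbb}_i,
\end{equation*}
where $\tilde{\bbb}_i := \bbb_i/C_i$; this rescaled bias is still summable, since $\sum_i\|\tilde{\bbb}_i\|_p \le (\inf_i|C_i|)^{-1}\sum_i\|\bbb_i\|_p < \infty$.

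With these identities in hand, I would invoke the machinery of Sections 3 and 4 on the normalized network. Theorem \ref{sufficientthm} applied to $\{\tilde{\bW}_n\}$ yields the existence of $\lim_n \bI_{\infty,m_n}\prod_{i=1}^n J_i \tilde{\bW}_i$ in $\cB(\bR^d,\ell^p)$, and the same proof, applied to tails $i \ge k$, furnishes hypothesis \eqref{limit2sufficient1}; the estimate $\prod_{j=i}^n\|\tilde{\bW}_j\|_p \le \exp\bigl(\sum_{j=1}^\infty\|\tilde{\bP}_j\|_p\bigr)$ from the proof of Theorem \ref{finalsufficient} supplies \eqref{limit2sufficient2}. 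Hence Theorem \ref{theoremonlimit2} applied to $(\tilde{\bW}_n,\tilde{\bbb}_n)$ produces the $\ell^p$-limit of $\bI_{\infty,m_n}\sum_{i=1}^n\bigl(\prod_{k=i+1}^n J_k\tilde{\bW}_k\bigr)J_i\tilde{\bbb}_i$. Multiplying both limits by the bounded convergent scalar sequence $C_n$ transports them back to $(\bW_n,\bbb_n)$, verifying conditions \eqref{limit1}--\eqref{limit2}, whence Theorem \ref{convergenceRELU} delivers the pointwise convergence of $\cN_n$ on $[0,1]^d$.

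The main obstacle is the bookkeeping in the scalar factorization: the correct rescaling of the biases is $\tilde{\bbb}_i = \bbb_i/C_i$ (using the cumulative product), not $\bbb_i/c_i$, and this is precisely the choice that makes $\sum_i\|\tilde{\bbb}_i\|_p$ finite via the two-sided boundedness of $|C_n|$ guaranteed by the nonvanishing of $\lim_n C_n$. Once this observation is in place, every remaining step is a direct invocation of a result already proved in the paper, with the diagonal factors $J_i$ never interfering because scalars commute freely through them.
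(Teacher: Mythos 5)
Your proposal is correct and follows essentially the same route as the paper: normalize each layer by $\bw^{(n)}_0$, use the Riesz--Thorin bound \eqref{RieszThorin} with \eqref{matrixnorm1inf} to show the normalized perturbations are summable, invoke Theorem \ref{sufficientthm} for the weight-product limit, and handle the bias term via Theorem \ref{theoremonlimit2}. The only (harmless) divergence is in the bias bookkeeping: you rescale $\bbb_i$ by the cumulative product $C_i$ and apply Theorem \ref{theoremonlimit2} to the fully normalized system, whereas the paper applies that theorem directly to the original $(\bW_n,\bbb_n)$ after checking the uniform bound \eqref{limit2sufficient2} via $\prod_{j=i}^n|\bw^{(j)}_0|\le C$; both verifications are valid and rest on the same two-sided boundedness of the partial products guaranteed by \eqref{cnnsufficientcon1}.
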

\begin{proof}
For each $n\in\bN$, let $\bW_n$ be the weight matrix associated with the filter mask $\bw^{(n)}$ defined by (\ref{cnnweightmatrix}).
By Theorem \ref{convergenceRELU}, we only need to show that under the given conditions of this theorem, limits  (\ref{limit1}) and  (\ref{limit2}) both exist for all $J_n\in\cD_{m_n}$, $n\in\bN$. By condition (\ref{cnnsufficientcon1}), we derive that $\bw^{(n)}_0\ne0$ for all $n\in\bN$. Set
$$
\tilde{\bW}_n:=\frac1{\bw^{(n)}_0}\bW_n,\ \ \mbox{for all}\ \ n\in\bN.
$$
Then we may factor $\tilde{\bW}_n$ as
$$
\tilde{\bW}_n=\bI_{m_n,m_{n-1}}+\bP_n,
$$
where
$$
(\bP_n)_{jk}:=\left\{
\begin{aligned}
0,&\quad j\le k\\
\frac{\bw^{(n)}_{j-k}}{\bw^{(n)}_0},&\quad j> k,
\end{aligned}
\quad 1\le j\le m_n,1\le k\le m_{n-1}.
\right.
$$
By (\ref{matrixnorm1inf}), we have for all $n\in\bN$ that
$$
\|\bP_n\|_p\le \frac{\displaystyle{\sum_{j=1}^{s_n}|\bw^{(n)}_j|}}{|\bw^{(n)}_0|}\mbox{ for both }p=1\mbox{ and }p=+\infty.
$$
Again, applying the Riesz-Thorin interpolation theorem (\ref{RieszThorin}), we then get for all $1\le p\le +\infty$ that
\begin{equation}\label{boundonPn}
\|\bP_n\|_p\le \frac{\displaystyle{\sum_{j=1}^{s_n}|\bw^{(n)}_j|}}{|\bw^{(n)}_0|}.
\end{equation}
As a result, by (\ref{cnnsufficientcon2}), the matrices $\tilde{W}_n$ satisfies condition (\ref{sufficient2}). In particular, we obtain that
\begin{equation}\label{p-norm_of_Pn}
\sum_{n=1}^\infty\|\bP_n\|_p<+\infty.
\end{equation}
By Theorem \ref{sufficientthm}, the infinite product of matrices
$$
\prod_{n=1}^\infty J_n\tilde{\bW}_n
$$
converges for all $J_n\in\cD_{m_n}$, $n\in\bN$. Noticing the definition of matrices $\tilde{\bW}_i$, we observe the following equation
$$
\prod_{i=1}^n J_i\bW_i=\biggl(\prod_{i=1}^n\bw^{(i)}_0\biggr)\prod_{i=1}^n J_i\tilde{\bW}_i.
$$
We thus conclude from condition (\ref{cnnsufficientcon1}) that limit (\ref{limit1}) exists for all $J_n\in\cD_{m_n}$, $n\in\bN$.

We next apply Theorem \ref{theoremonlimit2} to prove the existence of limit (\ref{limit2}) for all $J_n\in\cD_{m_n}$, $n\in\bN$. Condition (\ref{limit2sufficient3}) is automatically satisfied by the assumption of this theorem. Condition (\ref{limit2sufficient1}) also holds true by similar arguments as those developed above. It remands to verify condition (\ref{limit2sufficient2}). To this end, we fix  $n\in\bN$ and observe for all $i\le n$ that
$$
\begin{aligned}
\prod_{j=i}^n \|\bW_j\|_p&=\biggl(\prod_{j=i}^n|\bw^{(j)}_0|\biggr)\prod_{j=i}^n \|\tilde{\bW}_j\|_p\\
&=\biggl(\prod_{j=i}^n|\bw^{(j)}_0|\biggr)\prod_{j=i}^n \|\bI_{m_n,m_{n-1}}+\bP_n\|_p.
\end{aligned}
$$
It follows from the equation above that
\begin{equation}\label{Good-inquality}
\prod_{j=i}^n \|\bW_j\|_p\le \biggl(\prod_{j=i}^n|\bw^{(j)}_0|\biggr)\exp\biggl(\sum_{j=1}^\infty\|\bP_j\|_p\biggr).
\end{equation}
Note that condition (\ref{cnnsufficientcon1}) ensures that there exists a positive constant $C$ such that
 \begin{equation}\label{cnnsufficientcon3}
   \prod_{j=i}^n |\bw^{(j)}_0|\le C,\ \ \mbox{for all}\ \ i,n \ \ \mbox{with}\ \ 1\le i\le n<+\infty.
   \end{equation}
By substituting \eqref{p-norm_of_Pn} and (\ref{cnnsufficientcon3}) into the right hand side of inequality \eqref{Good-inquality}, condition (\ref{limit2sufficient2}) is fulfilled. Theorem \ref{theoremonlimit2} thus ensures that limit (\ref{limit2}) also exists for all $J_n\in\cD_{m_n}$, $n\in\bN$. That is, the hypothesis of Theorem \ref{convergenceRELU} is satisfied, and consequently, the deep ReLU convolutional neural networks (\ref{cnn1}) converge pointwise on $[0,1]^d$.
\end{proof}

Theorem \ref{cnnconvergence} extends Theorem \ref{cnnconvergence1} by replacing the restrictive hypothesis \eqref{cnnsufficient1conPlus} with significantly weaker condition \eqref{cnnsufficientcon1}.

A comment on condition (\ref{cnnsufficientcon1}) is in order.
A classical result on infinite products of scalars (see, \cite{Stein}, pages 246-247) ensures that if
$$
\bw^{(n)}_0=1+\lambda_n,\ |\lambda_n|\le \delta<1,\ \mbox{ and }\sum_{n=1}^\infty |\lambda_n|<+\infty
$$
then condition (\ref{cnnsufficientcon1}) holds true.

A final remark is that if the filter masks have a fixed length $s$ as they do in many convolutional neural networks, then condition (\ref{cnnsufficientcon2}) can be simplified as
$$
 \sum_{n=1}^\infty \frac{|\bw^{(n)}_j|}{|\bw^{(n)}_0|}<+\infty,\mbox{ for every }1\le j\le s.
$$
A necessary condition for the above to hold is that
$$
\lim_{n\to \infty}\frac{|\bw^{(n)}_j|}{|\bw^{(n)}_0|}=0, \ \ \mbox{for all}\ \ 1\le j\le s.
$$
This condition provides a guideline for the choice of the filter masks for CNNs.

{\small
\bibliographystyle{amsplain}

}

\end{document}